\renewcommand{\@trackname}{}
\newcommand{\nfe}[1]{\textcolor{NavyBlue}{\scriptsize #1}}
\newcommand{\std}[1]{\textcolor{gray}{#1}}
\newcommand{\modelname}{\textbf{CD$^{4}$LM}}
\newtheorem{proposition}{Proposition}  
\newtheorem{lemma}{Lemma}              
\newtheorem{definition}{Definition} 
\newtheorem{corollary}{Corollary}
\DeclareMathOperator{\Var}{Var}
\title{\modelname: Consistency Distillation and aDaptive Decoding for Diffusion Language Models}
\author{
  Yihao Liang$^{1}$ \And
  Ze Wang$^{2}$ \And
  Hao Chen$^{2}$ \And
  Ximeng Sun$^{2}$ \And
  Jialian Wu$^{2}$
  \AND
  Xiaodong Yu$^{2}$ \And
  Jiang Liu$^{2}$ \And
  Emad Barsoum$^{2}$ \And
  Zicheng Liu$^{2}$
  \AND
  Niraj K. Jha$^{1}$%
  \thanks{Contact: \texttt{yhliang@princeton.edu}, \texttt{jha@princeton.edu}.}
  \\
  $^{1}$ Princeton University \qquad
  $^{2}$ Advanced Micro Devices, Inc.
}
\begin{document}
\maketitle

\begin{abstract}

Autoregressive large language models achieve strong results on many benchmarks, but decoding remains fundamentally latency-limited by sequential dependence on previously generated tokens. Diffusion language models (DLMs) promise parallel generation but suffer from a fundamental \emph{static-to-dynamic misalignment}: Training optimizes local transitions under fixed schedules, whereas efficient inference requires adaptive ``long-jump'' refinements through unseen states.
Our goal is to enable \emph{highly parallel decoding for DLMs with low number of function evaluations} while preserving generation quality. To achieve this, we propose \modelname{}, a framework that decouples training from inference via \textbf{Discrete-Space Consistency Distillation (DSCD)} and \textbf{Confidence-Adaptive Decoding (CAD)}.
Unlike standard objectives, DSCD trains a student to be trajectory-invariant, mapping diverse noisy states directly to the clean distribution.
This intrinsic robustness enables CAD to dynamically allocate compute resources based on token confidence, aggressively skipping steps without the quality collapse typical of heuristic acceleration.
On GSM8K, \modelname{} matches the LLaDA baseline with a \textbf{5.18$\times$} wall-clock speedup; across code and math benchmarks, it pushes the accuracy-efficiency Pareto frontier, achieving a \textbf{3.62$\times$} mean speedup while improving average accuracy.
Code is available at \url{https://github.com/yihao-liang/CDLM}.
\end{abstract}

\section{Introduction}

Inference latency is a primary deployment bottleneck for large language models. Autoregressive large language models (AR-LLMs) achieve strong performance on language, code, and mathematical reasoning benchmarks~\citep{achiam2023gpt, grattafiori2024llama, hendrycks2021measuring, austin2021program, chen2021evaluating}, but decoding remains inherently sequential: Generating a length-$L$ output requires $L$ dependent forward passes, limiting parallelism and making end-to-end latency scale with sequence length.

Diffusion language models (DLMs) offer a different generation paradigm~\citep{austin2021structured, li2022diffusionlmimprovescontrollabletext, gong2022diffuseq}. They generate text through iterative denoising in the token space: Starting from a partially masked sequence, the model refines many positions in parallel at each denoising step. This enables bidirectional context and amortizes computation across the sequence, making DLMs particularly appealing for structured generation, such as code and mathematical solutions.

Despite the above promise, existing DLMs face two critical limitations when deployed under practical budget-limited decoding constraints, i.e., low number of function evaluations (NFE). 
First, there is a structural \emph{static-to-dynamic} misalignment between training and inference. Standard training rigidly optimizes local denoising transitions (e.g., $t \rightarrow t-\Delta t$) under fixed schedules, effectively learning a static vector field. However, low-NFE inference needs the model to perform agile ``long-jump'' refinements (e.g., $t \rightarrow t-k\Delta t$) through unseen regions of the state space. This forces the model to traverse intermediate masked states that are mathematically disjoint from the training distribution, limiting its ability to exploit adaptive acceleration without collapsing.
Second, current methods suffer from rigidity in compute allocation. Most DLM decoders rely on predetermined unmasking schedules that treat all instances as equally difficult. While block-wise decoding acts as a structural regularizer to prevent collapse, it remains computationally wasteful for easy instances and insufficient for hard ones. 
Consequently, previous works fail to simultaneously achieve stability and efficiency.
Aggressive step-reduction heuristics on standard models often destabilize text generation due to distribution shifts. Conversely, training-free adaptive methods that seek to mitigate these shifts often incur significant computational overhead (e.g., complex sorting or verification logic), which prevents their theoretical step reductions from translating fully into wall-clock speedups.

These limitations motivate a core challenge: \emph{How can we decouple the training objective from fixed schedules to enable stable, adaptive computation allocation?} 
We answer this with \modelname{}, a unified framework that bridges the gap between robust training and flexible inference.

On the training side, we propose \textbf{Discrete-Space Consistency Distillation (DSCD)}. Instead of overfitting to a single fixed trajectory, DSCD leverages a \emph{Rao-Blackwellized} objective to train a student model that matches the teacher's conditional posterior across a diverse range of masking patterns. This transforms the model from a fixed-schedule denoiser into a robust \emph{refinement operator}, mathematically capable of handling the irregular states produced by aggressive decoding policies.
On the inference side, leveraging this trajectory-invariant capability, we introduce \textbf{Confidence-Adaptive Decoding (CAD)}. Unlike heuristic step-skipping, CAD dynamically commits high-confidence tokens while deferring uncertain ones. This policy is fundamentally robust: It not only enhances efficiency within block-based schemes but also stabilizes pure diffusion (full-sequence) generation, preventing the quality collapse typical of standard adaptive baselines. In our framework, we synergize CAD with block-wise decoding to achieve the optimal Pareto frontier between global stability and local adaptivity.

Our contributions are as follows.

\begin{enumerate}
    \item \textbf{Identification of the training-inference misalignment as a limiting factor for efficiency.} 
    We identify a critical oversight in previous works: While adaptive decoding policies can reduce latency, their potential is strictly bottlenecked by standard training objectives. We show that relying on inference-time heuristics to bridge this gap introduces unnecessary control overhead.
    We verify that maximizing generation speed requires a synergistic approach: Efficiency is not only a decoding search problem but a training alignment problem.
    By decoupling the model from fixed trajectories, we unlock superlinear speedups that are unattainable by adaptive decoding strategies alone.
    
    \item \textbf{A unified framework for robust and efficient diffusion.} 
    We propose \modelname{}, which synergizes DSCD with CAD. DSCD trains a schedule-agnostic student robust to the intermediate states induced by adaptive policies, thereby enabling CAD to aggressively skip steps based on token confidence without inducing collapse. This joint design reconciles the conflict between generation stability and decoding efficiency, providing a general-purpose solution applicable to both block-wise and pure diffusion settings.

    \item \textbf{Systematic Pareto-frontier improvements.} 
    Across math and code benchmarks, \modelname{} achieves substantial speedups without sacrificing quality. On GSM8K, it matches the fixed-step baseline (77.6\% vs.\ 77.4\%) with a \textbf{5.18$\times$ wall-clock acceleration}; on HumanEval and MATH500, it improves accuracy while delivering \textbf{3.62$\times$ speedup}, strictly dominating standard diffusion baselines.
\end{enumerate}
\section{Related Work}

We review four lines of work that are most relevant to our approach: (i) AR-LLMs and their parallel decoding methods, (ii) DLMs and their training objectives, (iii) inference strategies and parallel decoding for diffusion models, and (iv) diffusion-based models for code and mathematical reasoning. Existing work has established that diffusion architectures can scale to competitive language modeling performance with attractive parallelism, but has left open how to decouple training from inference and allocate computation adaptively under tight NFE budgets. \modelname{} keeps the diffusion backbone while redesigning the training objective and decoding policy to better support adaptive NFE allocation.

\subsection{Autoregressive and Diffusion Language Models}

AR-LLMs remain the dominant architecture for natural language, code, and mathematical reasoning. Representative systems, such as GPT-4 \citep{achiam2023gpt}, the Llama family \citep{grattafiori2024llama}, Qwen \citep{bai2023qwen}, and DeepSeek \citep{liu2024deepseek}, achieve state-of-the-art results on a wide range of benchmarks. However, AR decoding is inherently sequential: To generate a sequence of length $L$, the model must perform $L$ forward passes, each conditioned on the previously generated prefix, which limits parallelism and leads to noticeable latency for long-context tasks. To mitigate this bottleneck, several works propose draft-verify style acceleration methods, including speculative decoding \citep{leviathan2023fast}, Medusa \citep{cai2024medusa}, and EAGLE \citep{li2024eagle}, which parallelize the prediction of multiple future tokens within a single forward pass while approximately preserving the original output distribution.

DLMs offer a different generation paradigm based on iterative denoising and parallel token updates. Inspired by the success of continuous diffusion models in image generation \citep{ho2020denoising}, a series of works, such as Diffusion-LM \citep{li2022diffusionlmimprovescontrollabletext}, Likelihood-Based DLMs \citep{gulrajani2023likelihood}, and Masked Diffusion Models \citep{sahoo2024simple}, extend the diffusion process to discrete token spaces. More recently, Large Language Diffusion Models (LLaDA) \citep{nie2025large} demonstrate that a purely diffusion-based architecture can match or even surpass comparably sized AR models.
However, a critical limitation of existing DLMs, including LLaDA, is the coupling of training schedules with inference budgets. They typically assume a fixed number of denoising steps derived from the training noise schedule, lacking the flexibility to dynamically trade computation for quality. \modelname{} addresses this by decoupling the decoding trajectory from the training schedule, enabling efficient generation even under tight NFE budgets where standard schedules fail.

\subsection{Training Objectives and Distillation for Diffusion Language Models}

Early work on discrete diffusion models focused on variational objectives and step-wise denoising losses. \citet{austin2021structured} introduce a variational lower bound with an auxiliary cross-entropy term to improve likelihood on text data. For masked or absorbing diffusion, \citet{sahoo2024simple} and \citet{shi2024simplified} rewrite the training objective as a mixture of classical masked language modeling losses or as a continuous-time weighted integral of cross-entropy terms, yielding more scalable training; under a uniform-state formulation, \citet{zhu2025simple} demonstrate that a simplified denoising loss over only noise-replaced tokens can outperform objectives based on Evidence Lower Bound (ELBO). Together, these works clarify the likelihood and optimization landscape of discrete diffusion, providing stable and efficient training recipes for DLMs.

A second line of research moves DLMs closer to AR-LLMs, either by reusing AR backbones or by aligning the training distribution with the inference trajectory. \citet{asada2025addressing} point out that standard discrete diffusion models are trained to denoise gold tokens corrupted by random noise, whereas, at inference time, they denoise self-generated tokens. They introduce a two-step diffusion training scheme with step-aware losses and a curriculum that gradually increases the probability of using self-generated text. In parallel, \citet{he2025mdpo} formulate the choice of denoising trajectories as a sequential decision-making problem and use reinforcement learning under the same progressive refining schedule used at inference. These methods reduce mismatch along a fixed schedule, complementary to approaches that train students robust to diverse intermediate states. 

To reduce NFE, some works explore one-step or few-step distillation. Consistency Models \citep{song2023consistency} propose a consistency objective in continuous spaces that learns a generator mapping noisy inputs at arbitrary timesteps directly to clean samples, enabling distillation of pretrained diffusion models. In language, DLM-One \citep{chen2025dlm} trains a student in the continuous embedding space to match the score function of a pretrained DLM. While promising for aggressive step reduction, these approaches typically rely on continuous or embedding-space diffusion. The distilled students are closely tied to a particular teacher and sampling trajectory, thus limiting flexibility when changing noise schedules or decoding policies.

In contrast, our DSCD method operates directly on token-level masked sequences and trains a refinement student that is broadly trajectory-invariant. Unlike previous distillation works that tie the student to a specific teacher sampling path or require continuous embedding spaces \citep{chen2025dlm, song2023consistency}, DSCD exposes the student to a diverse range of noise levels and masking patterns. This critical difference ensures that our student model remains robust to the irregular intermediate states induced by aggressive step-skipping strategies, a property that standard likelihood-based or specific-trajectory distillation objectives fail to guarantee. 

\subsection{Inference and Parallel Decoding in Diffusion Language Models}

On the inference side, DLMs typically use an iterative denoise decoding process. A common method is to unmask a fixed number of the most confident tokens at every step \citep{sahoo2024simple}. However, such predefined unmask schedules treat all tokens equally, leading to inefficiencies on structured tasks. Building on these basic unmask policies, a growing body of work seeks further acceleration by dynamically controlling parallelism. Adaptive Parallel Decoding \citep{israel2025accelerating} adjusts the number of tokens updated in parallel based on error estimates. Dimple \citep{yu2025dimple} introduces confidence-based schemes for vision-language tasks. Fast-dLLM \citep{wu2025fast} proposes a training-free parallel decoding framework combined with block-wise KV caching.

While effective in reducing the number of denoising steps, such training-free methods typically rely on inference-time heuristics to identify stable tokens. Since the base model is not explicitly trained for aggressive parallel decoding, these methods must employ additional selection logic (e.g., sorting confidence scores or dynamic thresholding) at every step to filter out low-confidence predictions. This introduces computational overhead, which can prevent the reduction in NFE from fully translating into wall-clock speedup (i.e., yielding sub-linear speedups). Other recent works also explore hybrid drafting: verification and caching \citep{wei2025accelerating,christopher2025speculative,liu2025dllm}.

\modelname{} differentiates itself by identifying that efficiency is not solely a decoding search problem but also an alignment problem. Our decoding strategy (CAD) is uniquely enabled by our training objective (DSCD). By aligning the student model's distribution with aggressive decoding trajectories, we minimize the need for complex filtering heuristics. This enables a lightweight decoding policy in which algorithmic NFE reductions translate efficiently, and often superlinearly, into wall-clock gains.

\subsection{Diffusion Language Models for Code and Mathematical Reasoning}

DLMs have recently been applied to structured generation tasks. DiffuCoder \citep{gong2025diffucoder} trains masked DLMs on large-scale code corpora. Mercury Coder \citep{labs2025mercury} scales discrete diffusion architectures to commercial settings tailored to code completion. Beyond code, diffusion models have also been explored for combinatorial reasoning \citep{ye2024beyond,huang2025reinforcing}.

Most prior works in this domain focus on developing specialized architectures or training domain-specific models from scratch (e.g., DiffuCoder, Mercury Coder). While effective, this approach is resource-intensive and creates silos between domains. In contrast, \modelname{} proposes a general-purpose efficiency framework that can be applied to existing pretrained diffusion backbones (like LLaDA) without architectural changes. We demonstrate that by simply aligning the training and decoding objectives, a general-purpose DLM can achieve Pareto-superior performance on specialized code and mathematics benchmarks, outperforming baselines that lack adaptive computation allocation.
\section{Method}
\label{sec:method}

\modelname{} is designed to resolve the structural misalignment between fixed diffusion training schedules and the need for flexible, budget-aware inference.
To bridge this gap, we first propose DSCD, which transforms the model from a rigid schedule-dependent denoiser into a \emph{trajectory-invariant} refinement operator capable of handling arbitrary intermediate states.
This intrinsic robustness serves as the necessary foundation for CAD, enabling the model to dynamically allocate computation based on token-level confidence without the quality collapse observed in standard acceleration baselines.
Under this framework, our method comprises three specific modules: (i) an absorbing-state discrete diffusion backbone inherited from LLaDA, which we briefly review in Sect.~\ref{sec:method:backbone}; (ii) a DSCD scheme (Sect.~\ref{sec:method:dscd}) that trains a student to be approximately trajectory-invariant along the teacher’s masking process; and (iii) a CAD policy (Sect.~\ref{sec:method:cad}) that dynamically allocates NFE across tokens and instances. Taken together, these modules decouple the trained model from any fixed diffusion schedule and enable flexible low-NFE decoding.

\subsection{Absorbing-State Discrete Diffusion Backbone}
\label{sec:method:backbone}

We briefly review the absorbing-state discrete diffusion formulation we build on and fix notation.

\paragraph{Data representation.}
Let $\mathcal{D}=\{(x,y)\}$ be a corpus of conditional generation tasks, where $x=(x_1,\dots,x_{L_x})$ is the prompt (e.g., a coding problem or mathematics question) and $y=(y_1,\dots,y_{L_y})$ is the target sequence (e.g., the solution program or derivation). We concatenate them into a single sequence $z=(x;y)=(z_1,\dots,z_L)$, with $L=L_x+L_y$. The base vocabulary $\mathcal{V}$ is augmented with a special absorbing mask token $\mathtt{m}\notin\mathcal{V}$, and an end-of-sequence token \texttt{<EOS>}, giving the extended vocabulary: $\mathcal{V}^+=\mathcal{V}\cup\{\mathtt{m},\texttt{<EOS>}\}$. For chain-of-thought supervision, we place the entire reasoning trace together with the final answer in the target region ($i>L_x$); hence, the model explicitly learns to refine both intermediate reasoning and final answers. For code completion benchmarks, the natural-language problem description and function signature belong to the prompt region, while the full solution body is treated as target tokens.

\paragraph{Forward absorbing process.}
We adopt the absorbing discrete diffusion framework for text \citep{austin2021structured,shi2024simplified,sahoo2024simple,nie2025large}, which models corruption as stochastic erasure rather than additive noise. Let $t\in[0,1]$ be a continuous noise level with a monotone schedule $\alpha(t)\in[0,1]$ satisfying $\alpha(0)=0$ and $\alpha(1)\approx 1$. The forward process $q_t(\tilde{z}\mid z)$ gradually destroys information in the target region while preserving the prompt:
\begin{equation*}
q_t(\tilde{z}\mid z)=\prod_{i=1}^{L} q_t(\tilde{z}_i\mid z_i)
\label{eq:forward-process2}
\end{equation*}
\begin{equation}
q_t(\tilde{z}_i\mid z_i)=
\begin{cases}
\delta_{\tilde{z}_i=z_i}, & i\le L_x \quad\text{(prompt preservation)},\\[2pt]
(1-\alpha(t))\,\delta_{\tilde{z}_i=z_i} + \alpha(t)\,\delta_{\tilde{z}_i=\mathtt{m}},
& i> L_x \quad\text{(target corruption)},
\end{cases}
\label{eq:forward-process}
\end{equation}
where $\delta$ is the Kronecker delta. Once a token is replaced by $\mathtt{m}$, it remains masked for all larger $t$; hence, the forward chain is an absorbing Markov process. We denote the masked set by $\mathcal{M}(\tilde{z})=\{i:\tilde{z}_i=\mathtt{m}\}$. In practice, we do not simulate this Markov chain step-by-step. Given $z$ and $t$, we instead sample a Bernoulli mask $b\in\{0,1\}^{L_y}$ with $\Pr(b_j=1)=\alpha(t)$ for target positions and set $\tilde{z}=\mathrm{Mask}(z,b)$, which matches $q_t(\tilde{z}\mid z)$ in \eqref{eq:forward-process}. In all experiments, we reuse the continuous-time masking schedule from LLaDA and adopt the same choices of $\alpha(t)$, the sampling distribution $\rho(t)$ over $t$, and the weighting function $w(t)$.

\paragraph{Diffusion teacher and ELBO-style objective.}
A DLM parameterized by $\phi$ defines a reverse model that predicts clean tokens at the masked positions of a corrupted sequence:
\begin{equation}
p_\phi(z\mid\tilde{z},t)
=\prod_{i\in\mathcal{M}(\tilde{z})} p_\phi(z_i\mid\tilde{z},t),
\end{equation}
where each $p_\phi(z_i\mid\tilde{z},t)$ is a categorical distribution over $\mathcal{V}^+$. We follow LLaDA~\citep{nie2025large} and parameterize $p_\phi$ with a Transformer that maps $(\tilde{z},t)$ to logits over the vocabulary at all positions; unmasked positions are simply copied from the input. For absorbing discrete diffusion, the log-likelihood admits an evidence lower bound whose dominant term reduces to a time-weighted masked cross-entropy \citep{austin2021structured,sahoo2024simple,shi2024simplified}. We adopt the simplified continuous-time objective used in masked DLMs:
\begin{equation}
\mathcal{L}_{\text{teacher}}(\phi)
=
\mathbb{E}_{(x,y)\sim\mathcal{D}}
\mathbb{E}_{t\sim\rho(t)}
\mathbb{E}_{\tilde{z}\sim q_t(\cdot\mid z)}
\bigg[
w(t)\,\frac{1}{|\mathcal{M}(\tilde{z})|}
\sum_{i\in\mathcal{M}(\tilde{z})}
-\log p_\phi(z_i\mid\tilde{z},t)
\bigg],
\label{eq:teacher-loss}
\end{equation}
where $\rho(t)$ and $w(t)$ are inherited from LLaDA. Optimizing \eqref{eq:teacher-loss} yields a high-quality diffusion teacher, but sampling requires iterating over a long schedule $t_1>\dots>t_T$, leading to high decoding latency. In practice, $p_\phi$ is instantiated as the publicly released LLaDA checkpoint of the corresponding size and kept frozen throughout student training; we do not modify its architecture, training objective, or noise schedule. All improvements reported in this article, therefore, come from the distilled student and our decoding policy, rather than from retraining the teacher.

\subsection{Discrete-Space Consistency Distillation}
\label{sec:method:dscd}

Our goal is to obtain a student $p_\theta$ that can reliably refine partially masked sequences in a few steps, without being tied to any particular diffusion schedule. Conceptually, we would like $p_\theta$ to be approximately \emph{trajectory-invariant} along the absorbing diffusion process of the teacher.

\paragraph{Ideal trajectory invariance.}
For a clean sequence $z$ and its absorbing diffusion trajectory $\{\tilde{z}_t\}_{t\in[0,1]}$, generated by \eqref{eq:forward-process}, an ideal consistency-style student would satisfy
\begin{equation}
p_\theta(z \mid \tilde{z}_t)
\;\approx\;
p_\theta(z \mid \tilde{z}_{t'})
\;\approx\;
p_\phi(z \mid \tilde{z}_t, t)
\quad
\text{for all } t,t'\in[0,1].
\label{eq:ideal-consistency}
\end{equation}
Unlike the teacher $p_\phi$, which conditions on explicit time $t$, the student $p_\theta$ learns to infer the effective noise level directly from the masked input $\tilde{z}$, enabling time-agnostic inference.
This requirement is too strong to enforce directly: It couples all time points on the trajectory and requires matching full sequence distributions. In practice, DSCD implements a weaker but tractable surrogate that enforces \emph{pairwise} consistency between stochastic views of the same $z$ and anchors them to the data distribution.

\paragraph{Paired teacher-subset masking.}
For each training example, we first sample a student mask ratio $r_S \sim \mathcal{U}(r_S^{\min}, r_S^{\max})$ and then set a lighter teacher ratio $r_T = r_S \cdot u$, where $u \sim \mathcal{U}(u_{\min}, u_{\max})$ with $u_{\max}<1$ (thus $r_T<r_S$).
We convert each ratio to its diffusion timestep using the same noise schedule, yielding $t_S$ and $t_T$. Let $n_S=\lfloor L_y r_S \rfloor$ and $n_T=\lfloor L_y r_T \rfloor$. We form the student masked set $\mathcal{M}_S$ by uniformly sampling (without replacement) $n_S$ target positions from $\{L_x+1,\dots,L\}$. To construct the teacher mask, we then uniformly sample a subset $\mathcal{M}_T \subseteq \mathcal{M}_S$ of size $n_T$. This nested masking makes the teacher informationally richer (it conditions on a strict superset of the student's visible context); hence, the consistency loss is evaluated on $\mathcal{M}_S$.
The overall training pipeline is depicted in Fig.~\ref{fig:dscd_pipeline}.

At a high level, DSCD then combines a reconstruction loss and a Kullback-Leibler (KL) divergence-based consistency loss computed on $\mathcal{M}_{\text{S}}$ to (i) anchor the student to the data distribution at its own masked positions and (ii) align the student with the teacher’s softer predictions under the same corruption pattern, yielding a practical relaxation of the trajectory-invariance property in \eqref{eq:ideal-consistency}.

\paragraph{Variance reduction via Rao-Blackwellization.}
We theoretically justify our choice of the \emph{teacher-subset masking} scheme ($\mathcal{M}_{\text{T}} \subseteq \mathcal{M}_{\text{S}}$) not merely as a heuristic, but as a variance reduction technique rooted in statistical decision theory.
Intuitively, if $\mathcal{M}_{\text{T}}$ were independent of $\mathcal{M}_{\text{S}}$, the teacher might lack access to tokens visible to the student, causing it to marginalize over ``blind'' positions and produce high-variance targets.
By enforcing $\mathcal{M}_{\text{T}} \subseteq \mathcal{M}_{\text{S}}$, we ensure the teacher always conditions on a strict superset of the student's information.
As formally proven in Appendix~\ref{app:variance}, this effectively applies Rao-Blackwellization~\citep{casella1996rao} to the gradient estimator: By conditioning on the additional information $\mathcal{M}_{\text{S}} \setminus \mathcal{M}_{\text{T}}$, we strictly minimize the conditional variance of the distillation target compared to independent masking.

\paragraph{Reconstruction loss.}
To anchor the student to the data distribution and avoid drifting too far from the ground-truth tokens, we include a reconstruction term that predicts $z$ at the student's masked positions:
\begin{equation}
\mathcal{L}_{\text{recon}}(\theta)
=
\mathbb{E}_{(x,y)\sim\mathcal{D}}
\mathbb{E}_{r_{\text{S}},\mathcal{M}_{\text{S}}}
\bigg[
\frac{1}{|\mathcal{M}_{\text{S}}|}
\sum_{i\in\mathcal{M}_{\text{S}}}
-\log p_\theta(z_i \mid \tilde{z}^{\text{S}})
\bigg].
\label{eq:recon-loss}
\end{equation}
This term coincides with the standard masked language modeling cross-entropy objective when $p_\theta$ is used as the denoiser, and can be viewed as learning a strong supervised initialization for the student.

\begin{figure*}[t]
  \centering
  \includegraphics[width=\textwidth]{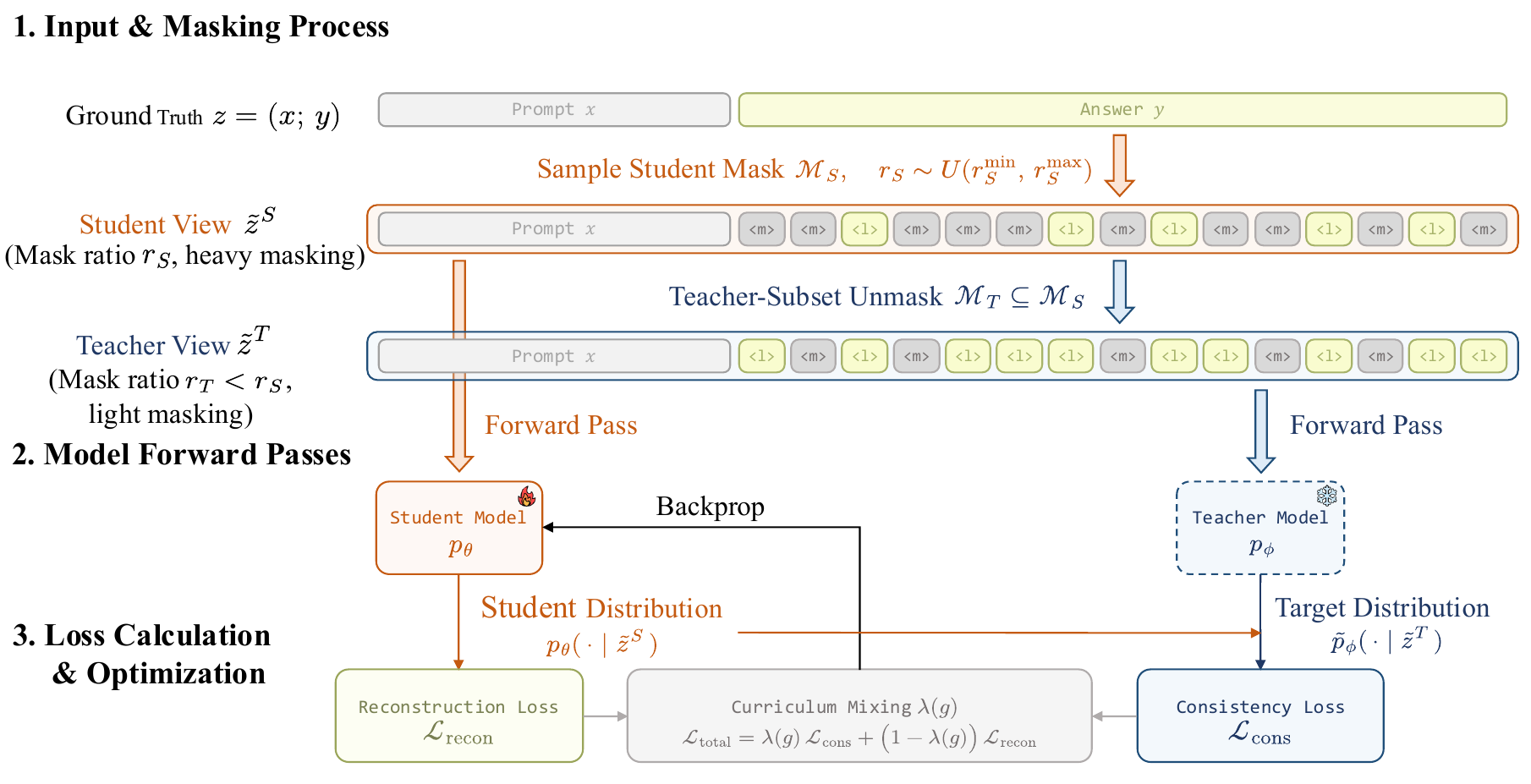}
    \caption{
    \textbf{Overview of the DSCD pipeline.}
    \textbf{(1) Input \& Masking:} We employ a \emph{teacher-subset masking} strategy where the teacher's mask $\mathcal{M}_{\text{T}}$ is strictly sampled from the student's mask $\mathcal{M}_{\text{S}}$ ($\mathcal{M}_{\text{T}} \subseteq \mathcal{M}_{\text{S}}$). This ensures the teacher always conditions on a superset of the student's context, acting as a lower-variance guide.
    \textbf{(2) Forward \& Optimization:} The student $p_\theta$ predicts tokens from the heavily masked view, optimized jointly by a reconstruction loss $\mathcal{L}_{\text{recon}}$ (anchoring to ground truth) and a consistency loss $\mathcal{L}_{\text{cons}}$ (aligning with the frozen teacher's soft targets).
    \textbf{(3) Curriculum:} A dynamic schedule $\lambda(g)$ governs the transition from pure distillation to supervised refinement.
  }
  \label{fig:dscd_pipeline}
\end{figure*}

\paragraph{Consistency loss via KL distillation.}
The core consistency signal comes from aligning the student's predictions with the teacher's distribution at the student's masked positions.
Given teacher logits $l_\phi^{(i)}(\tilde{z}^{\text{T}})$, we define a temperature-scaled target distribution
\begin{equation}
    \tilde{p}_\phi(\cdot\mid\tilde{z}^{\text{T}})_i
    =
    \mathrm{softmax}\!\left(
    \frac{l_\phi^{(i)}(\tilde{z}^{\text{T}})}{\tau}
    \right),
    \qquad \tau\ge1.
    \end{equation}
    We then minimize the KL divergence between teacher and student distributions on the student mask $\mathcal{M}_{\text{S}}$:
    \begin{equation}
    \mathcal{L}_{\text{cons}}(\theta)
    =
    \tau^2 \cdot
    \mathbb{E}_{(x,y)\sim\mathcal{D}}
    \mathbb{E}_{r_{\text{S}},r_{\text{T}},\mathcal{M}_{\text{S}},\mathcal{M}_{\text{T}}}
    \bigg[
    \frac{1}{|\mathcal{M}_{\text{S}}|}
    \sum_{i\in\mathcal{M}_{\text{S}}}
    \mathrm{KL}\Big(
    \tilde{p}_\phi(\cdot\mid\tilde{z}^{\text{T}})_i
    \;\big\|\;
    p_\theta(\cdot\mid\tilde{z}^{\text{S}})_i
    \Big)
    \bigg].
    \label{eq:cons-loss}
\end{equation}
The $\tau^2$ scaling compensates for the reduced gradients caused by temperature smoothing, following standard knowledge distillation practice~\citep{hinton2015distilling}.

\paragraph{Curriculum mixing schedule and intuition.}
Jointly optimizing \eqref{eq:recon-loss} and \eqref{eq:cons-loss} from scratch is unstable because the student is initially far from the teacher and the teacher's predictions at high noise levels can be noisy. We, therefore, use a curriculum over the normalized training progress $g\in[0,1]$:
\begin{equation}
\mathcal{L}_{\text{total}}(\theta,g)
=
\lambda(g)\,\mathcal{L}_{\text{cons}}(\theta)
+
\big(1-\lambda(g)\big)\,\mathcal{L}_{\text{recon}}(\theta),
\label{eq:total-loss}
\end{equation}
where $\lambda(g)$ gradually decreases from $\lambda_0$ to $\lambda_1$ using a cosine schedule with an initial warmup phase. Early in training ($g\approx 0$), the loss is dominated by $\mathcal{L}_{\text{cons}}$; hence, the student learns to imitate the teacher's soft predictions and inherits its denoising behavior. As training progresses, $\lambda(g)$ decreases and the reconstruction term gradually takes over, anchoring the student to the ground-truth targets and improving accuracy. Empirically, we find that pure distillation (setting $\lambda(g)\equiv 1$) tends to be unstable and underperforms, whereas the curriculum in \eqref{eq:total-loss} yields faster convergence and better final accuracy.

\paragraph{Theoretical interpretation.}
While DSCD is empirically motivated, it possesses a rigorous theoretical grounding. 
By enforcing \emph{teacher-subset masking} ($\mathcal{M}_{\text{T}} \subseteq \mathcal{M}_{\text{S}}$), we effectively construct a stochastic filtration of information where the teacher always holds a strictly finer information set than the student.
In Appendix~\ref{app:dscd_theory}, we formally prove that minimizing $\mathcal{L}_{\text{cons}}$ is equivalent to training the student to approximate the Martingale projection \citep{vincent2011connection} of the teacher's belief state. 
This guarantees that the student learns the expected trajectory of the teacher, satisfying the statistical definition of trajectory invariance.

For a complete algorithmic description of the training procedure, including the details of mask sampling and the curriculum schedule, readers can refer to the pseudocode provided in Appendix~\ref{sec:appendix-distillation}.

\subsection{Confidence-Adaptive Decoding}
\label{sec:method:cad}

After distillation, we discard the teacher and decode using the student model only.
Our CAD is a generic unmasking policy.
In this work, we instantiate it within the LLaDA block diffusion framework (Fig.~\ref{fig:decoding-overview}(c));
pure diffusion (Fig.~\ref{fig:decoding-overview}(d)) is recovered as a special case by setting $b=L_{\text{gen}}$.

\paragraph{Block diffusion schedule.}
Unlike autoregressive models that generate variable-length sequences by dynamically appending tokens, diffusion-based decoding operates on a pre-allocated, fixed-size canvas.
We set a maximum target horizon $L_{\text{gen}}$ (e.g., matching the training sequence length or a system budget) and pad the initial state with masks.
The actual sequence length is determined dynamically during decoding via the end-of-sequence (EOS) blocking mechanism.
Given $L_{\text{gen}}$ and block size $b$, we partition positions into contiguous blocks
$\{\mathcal{B}_j\}_{j=1}^{J}$, where
$\mathcal{B}_j=\{(j\!-\!1)b+1,\ldots,\min(jb,L_{\text{gen}})\}$ and $J=\lceil L_{\text{gen}}/b\rceil$.
Decoding proceeds left-to-right with an active block index $j$. Tokens from finished blocks are cached (frozen), while future blocks remain masked until activated.
Note that while our primary experiments use full-attention recomputation to isolate the algorithmic gains of DSCD, this block-wise formulation is structurally compatible with approximate KV caching schemes~\citep{wu2025fast}.

\begin{figure*}[t]
  \centering
  \includegraphics[width=\textwidth]{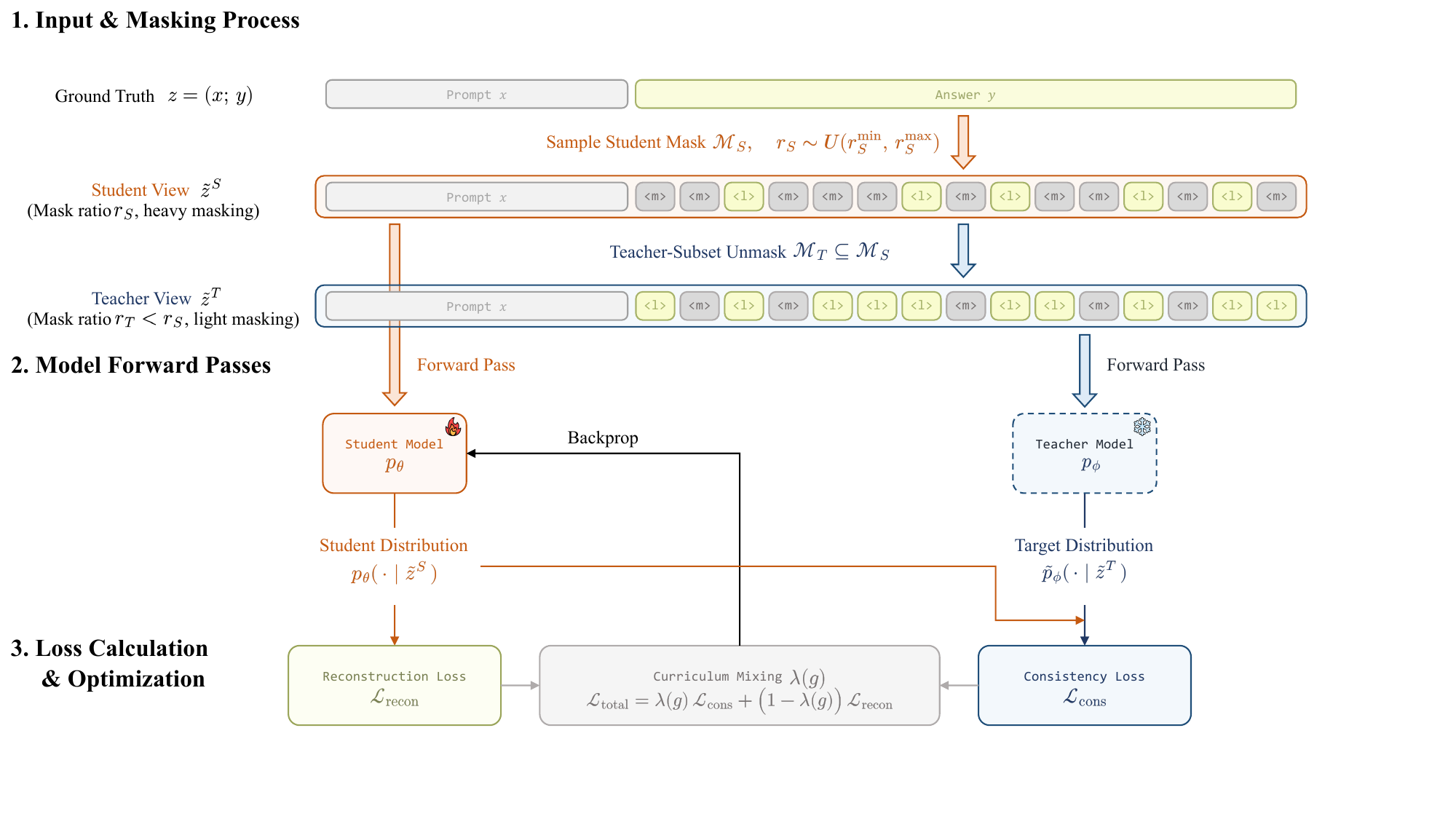}
    \caption{\textbf{Overview of different decoding paradigms.}
(a) Autoregressive decoding generates tokens left-to-right.
(b) Block diffusion decodes a preset number of masked tokens per step within a fixed block schedule.
(c) Block diffusion with adaptive  decoding: Within each block, we \emph{adaptively} select a high-confidence set of masked positions to unmask, enabling variable token counts (and selected positions) across steps while preserving the block-wise decoding structure.
(d) The same adaptive  rule can also be applied to \emph{pure} diffusion, iteratively denoising the entire sequence starting from all \texttt{<mask>} tokens.
}

  \label{fig:decoding-overview}
\end{figure*}

\paragraph{State, masked set, and eligible set.}
Let $\tilde{z}^{(s)}\in\mathcal{V}^{L_{\text{gen}}}$ denote the partially-masked state at decoding step $s$ and let $\mathcal{M}^{(s)}=\{i\in[L_{\text{gen}}]:\tilde{z}^{(s)}_i=\mathtt{m}\}$ be the masked set. CAD operates on the \emph{eligible set} within the active block:
\begin{equation}
\mathcal{E}^{(s)}=\mathcal{M}^{(s)}\cap \mathcal{B}_j,
\label{eq:eligible-set}
\end{equation}
which reduces to $\mathcal{E}^{(s)}=\mathcal{M}^{(s)}$ when $b=L_y$ (pure diffusion).

\paragraph{Confidence-adaptive set selection.}
We treat the decoding process as a dynamic risk-efficiency trade-off. Given the student predictions $p_\theta(\cdot\mid \tilde{z}^{(s)})$, we compute a confidence score
$c_i^{(s)}$ for each $i\in\mathcal{E}^{(s)}$ (e.g., $c_i^{(s)}=\max_v p_\theta(v\mid\tilde{z}^{(s)})_i$) as a proxy for correctness probability.
We first form a candidate set $\widehat{\mathcal{S}}^{(s)}=\{i\in\mathcal{E}^{(s)}:c_i^{(s)}\ge \gamma_{\text{conf}}\}$ containing tokens where the model's certainty outweighs the risk of error. To stabilize the dynamic trajectory, we then clamp the commit size:
\begin{equation}
k^{(s)}=\mathrm{clip}\big(|\widehat{\mathcal{S}}^{(s)}|,\;k_{\min},\;k_{\max}\big),\qquad
\mathcal{S}^{(s)}=\mathrm{TopK}\big(\{c_i^{(s)}\}_{i\in\mathcal{E}^{(s)}},\;k^{(s)}\big),
\label{eq:cad-topk-main}
\end{equation}
and commit predictions only on $\mathcal{S}^{(s)}$. This greedy policy approximates the optimal stopping rule for per-step decoding risk (see derivation in Appendix~\ref{app:cad_theory}).

\begin{figure*}[t]
  \centering
  \includegraphics[width=\textwidth]{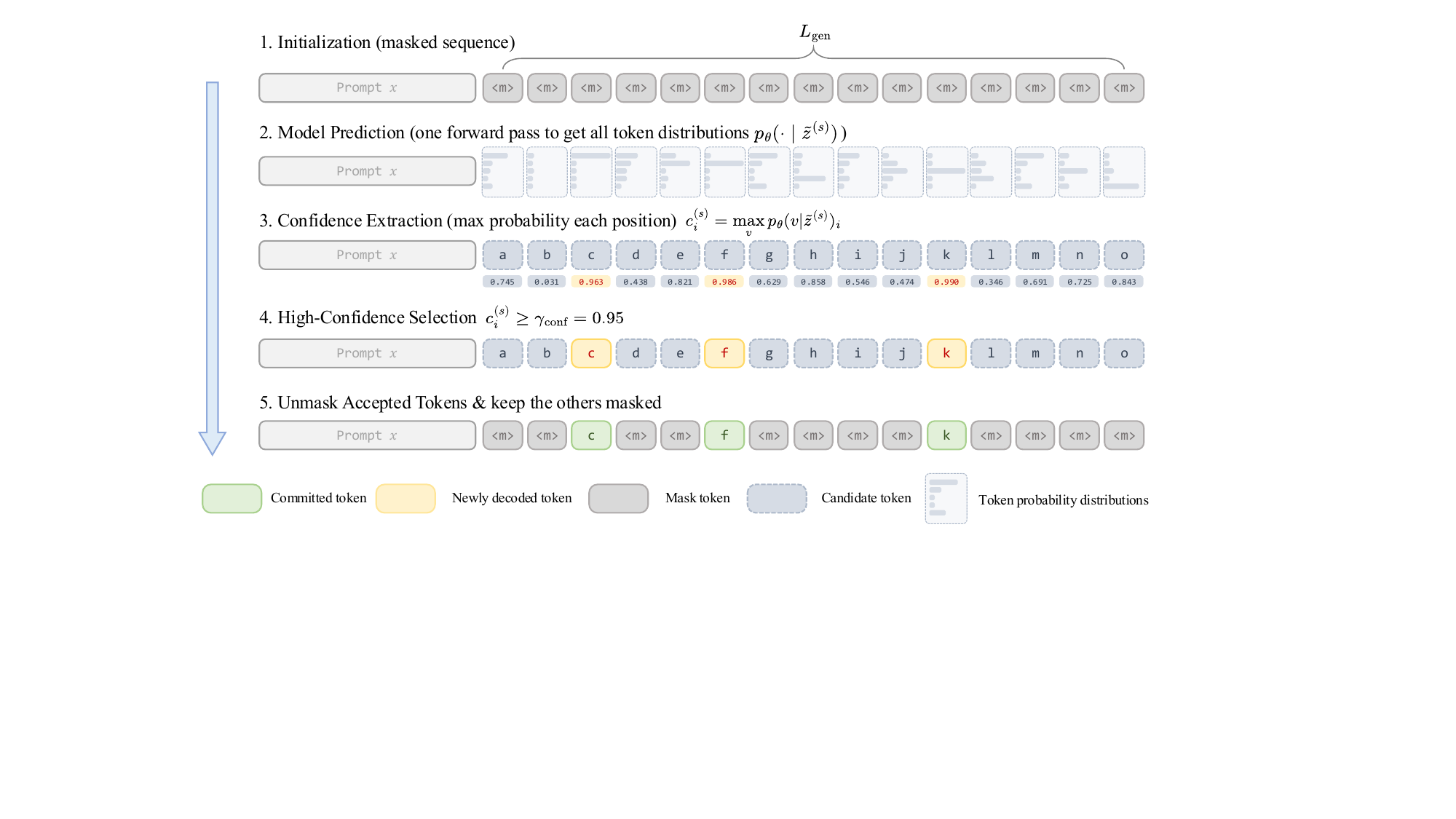}
\caption{
     \textbf{Illustration of CAD.} In each step, the model predicts token distributions for all masked positions in parallel. We compute the confidence score $c_i^{(s)}$ based on the maximum probability and selectively unmask tokens that satisfy the threshold $c_i^{(s)} \ge \gamma_{\text{conf}}$. Low-confidence tokens remain masked for subsequent iterations.
 }

  \label{fig:CAD}
\end{figure*}

\paragraph{EOS blocking and termination.}
To prevent premature termination, we apply EOS blocking with threshold $\beta_{\text{EOS}}$.
In block diffusion, once the active block is fully resolved (i.e., $\mathcal{E}^{(s)}=\emptyset$),
we advance $j\leftarrow j+1$ and continue; decoding terminates when all blocks are decoded or when reaching
the maximum NFE budget $S_{\max}$.
CAD guarantees monotonic progress within each active block (at least one token is committed whenever $\mathcal{E}^{(s)}\neq\emptyset$);
a formal step bound is provided in Appendix~\ref{app:cad_theory}.

\paragraph{Complexity and overhead.}
Unlike heuristics that require sorting candidate tokens or managing dynamic buffer states (which introduce CPU-GPU synchronization overhead), our CAD controller is implemented as a lightweight, fully batched tensor masking operation. The selection cost is negligible compared to the Transformer forward pass $\mathcal{O}\big(B\,N_{\text{layer}}(L^2D+LD^2)\big)$, allowing the theoretical NFE reduction to translate directly into latency savings.

For the complete pseudocode of the CAD algorithm, which details the interaction between block-wise diffusion and the dynamic acceptance policy, readers can refer to Appendix~\ref{sec:appendix-decoding}.
\section{Experimental Setup}
\label{sec:experiments}

We discuss the experimental setup next.

\subsection{Models and Training}
\paragraph{Teacher and student.}
We use \textbf{LLaDA-8B-Instruct}~\citep{nie2025large} as the teacher. The student is initialized from the teacher's pretrained weights. All experiments use BF16 precision and a maximum context length of 1024.

\paragraph{Training data.}
We train our student model on a 200K-sample subset of \textbf{OpenCodeInstruct}~\citep{ahmad2025opencodeinstruct}, a comprehensive instruction-following code generation dataset. To ensure high-quality distillation, we filter for samples with valid solutions and reserve 5\% (10K samples) for validation. For mathematical reasoning experiments, we use the \textbf{GSM8K}~\citep{cobbe2021training} training split and use the same training protocol.

\subsection{Benchmarks}
\paragraph{Code generation.} 
We evaluate the model in a zero-shot setting on HumanEval~\citep{chen2021evaluating} and three-shot on MBPP~\citep{austin2021program}, and also report results on the stricter variants: HumanEval+ and MBPP+. We report functional correctness using pass@1 and pass@5.

\paragraph{Mathematical reasoning.} 
We report accuracy on GSM8K~\citep{cobbe2021training} and MATH500~\citep{hendrycks2021measuring} in a zero-shot setting.

\paragraph{Evaluation framework.} 
All evaluations are implemented based on the open-sourced \textbf{DAEDAL}'s evaluation codebase \citep{li2025beyond},\footnote{\url{https://github.com/Li-Jinsong/DAEDAL}}, which standardizes prompting, post-processing, and metric computation for LLaDA-style DLMs.

\subsection{Training Protocol}

We use a global batch size of 64 on 8$\times$AMD MI250 GPUs, AdamW with learning rate $5\times10^{-6}$, cosine decay, and 10\% warmup, and train for three epochs. Detailed hyperparameters and prompt templates are provided in Appendix~\ref{sec:appendix-implementation}.

\subsection{Inference Protocols}
\paragraph{Generation paradigms.}
We consider three generation paradigms:
(i) Sequential ($b=1$),
(ii) Block Diffusion (block size $b=32$),
and (iii) Pure Diffusion ($b=L_{\text{gen}}$).

For mathematical reasoning benchmarks, we use block diffusion with $b=32$, which is the best-performing configuration in our tuning process and is used throughout the main mathematical experiments. We use pure diffusion for code generation. Unless specified otherwise, we use a maximum generation length of $L_{\text{gen}}=256$.

\paragraph{Sampling for pass@k.}
We compute pass@1 using greedy $\tau_{\text{samp}}=0$ decoding and pass@5 using sampling with temperature $\tau_{\text{samp}}=1.0$ (other decoding knobs follow the DAEDAL defaults for each benchmark).

\paragraph{Speed measurement.}
We report wall-clock speedup under the same hardware and evaluation harness, measured relative to the corresponding LLaDA-8B-Instruct baseline configuration for each benchmark with a batch size of 1.
\section{Results and Analysis}
\label{sec:results}

Next, we present our experimental results and their analysis.

\subsection{Main Results}

Across the full results in Tables~\ref{tab:gsm8k}, \ref{tab:mbpp}, and \ref{tab:benchmark}, our method improves the unweighted average score from 45.6 to 46.3 while reducing the mean NFE from 292.6 to 117.2, yielding a \textbf{3.62$\times$ end-to-end speedup}. 
Quantitatively, the gains translate into large wall-clock accelerations with comparable quality:
On GSM8K, we match the fixed-step block-diffusion baseline (77.6\% vs.\ 77.4\%) while achieving a 5.18$\times$ wall-clock speedup over the sequential baseline defined in Table~\ref{tab:gsm8k}; 
on HumanEval, we improve pass@1 by 2.2\% with a \textbf{3.30$\times$} speedup; 
and on MATH500, we improve accuracy by 1.3\% with a \textbf{5.33$\times$} speedup.

\begin{table*}[t]
  \centering
  \resizebox{\textwidth}{!}{
    \begin{tabular}{llcccc}
      \hline
      \textbf{Generation} & \textbf{Model} & \textbf{Tokens/Step} &
      \textbf{Acc. (\%)} & \textbf{Avg. NFE} & \textbf{Tokens/s (Speedup)} \\
       &  &  & $\uparrow$ & $\downarrow$ & $\uparrow$ \\
      \hline
      \multirow{1}{*}{\textbf{\shortstack[l]{Sequential ($b{=}1$)}}}
        & LLaDA & 1    & 76.4 & 256  & 7.5 (1.00$\times$) \\
      \hline
      \multirow{4}{*}{\textbf{\shortstack[c]{Block Diffusion\\($b{=}32$)}}}
        & LLaDA        & 1     & 77.4 & 256  & 7.5 (1.00$\times$) \\
        & LLaDA        & 2     & 74.8 & 128  & 14.9 (1.99$\times$) \\
        & Fast-dLLM  & /     & 76.9 & 78.7   & 18.4 (2.46$\times$) \\
        & \textbf{Ours}& 1--32 & \textbf{77.6} & \textbf{76.3} & \textbf{38.7 (5.18 $\times$)} \\
      \hline
      \multirow{3}{*}{\textbf{\shortstack[c]{Pure Diffusion\\($b{=}256$)}}}
        & LLaDA        & 1     & 13.8 & 256  & 7.5 (1.00$\times$) \\
        & LLaDA        & 2     & 13.0 & 128  & 14.9 (1.99$\times$) \\
        & \textbf{Ours}& 1--32 & 54.7 & 80.6 & 36.7 (4.91$\times$) \\
      \hline
    \end{tabular}
   }

    \caption{\textbf{GSM8K (zero-shot) accuracy-efficiency across generation paradigms.} We compare LLaDA-8B-Instruct under sequential, block diffusion ($b=32$), and pure diffusion ($b=256$) decoding against our method. Tokens/Step denotes the number of newly unmasked tokens per denoising step (fixed for LLaDA; adaptive for ours). Avg. NFE is the average number of function evaluations per sample. Tokens/s reports the achieved decoding throughput measured as finalized tokens per second under wall-clock time, and Speedup is computed relative to the sequential baseline. For Fast-dLLM, we report the official results corresponding to its parallel decoding strategy \emph{without} the KV cache. All runs use a maximum sequence length of $L_{\text{gen}}=256$.}

  \label{tab:gsm8k}
\end{table*}

\begin{table*}[t]
  \centering
    \resizebox{\textwidth}{!}{
    \begin{tabular}{llccccc}
      \hline
      \textbf{Generation} & \textbf{Model} & \textbf{Tokens/Step} &
      \textbf{pass@1 (\%)} & \textbf{pass@5 (\%)} & \textbf{Avg. NFE} & \textbf{Tokens/s (Speedup)} \\
       &  &  & $\uparrow$ & $\uparrow$ & $\downarrow$ & $\uparrow$ \\
      \hline
    
      \multirow{1}{*}{\textbf{\shortstack[l]{Sequential ($b{=}1$)}}}
        & LLaDA & 1 & 36.8 & 49.2 & 256 & 3.6 (1.00$\times$) \\
      \hline
    
      \multirow{3}{*}{\textbf{\shortstack[c]{Block Diffusion\\($b{=}32$)}}}
        & LLaDA & 1 & 36.9 & 51.4 & 256 & 3.6 (1.00$\times$) \\
        & LLaDA & 2 & 33.2 & 44.8 & 128 & 7.3 (2.00$\times$) \\
        & \textbf{Ours} & 1--32 & \textbf{39.0} & \textbf{52.6} & \textbf{97.7} & \textbf{10.9 (2.96$\times$)} \\
      \hline
    
      \multirow{3}{*}{\textbf{\shortstack[c]{Pure Diffusion\\($b{=}256$)}}}
        & LLaDA & 1 & 6.0  & 14.8 & 256 & 3.7 (1.00$\times$) \\
        & LLaDA & 2 & 14.4 & 24.2 & 128 & 7.3 (2.00$\times$) \\
        & \textbf{Ours} & 1--32 & 36.4 & 51.0 & 99.4 & 10.8 (2.96$\times$) \\
      \hline
    \end{tabular}

    }

  \caption{\textbf{MBPP (three-shot) pass@k and efficiency across generation paradigms (LLaDA-8B-Instruct).} We report pass@1/pass@5 under sequential, block diffusion ($b=32$), and pure diffusion ($b=256$) decoding. Avg. NFE, Tokens/s and Speedup are defined as in Table~\ref{tab:gsm8k}. All runs use a maximum sequence length of $L_{\text{gen}}=256$.}
  \label{tab:mbpp}
\end{table*}

\begin{figure}[t]
  \centering
  \includegraphics[width=\linewidth]{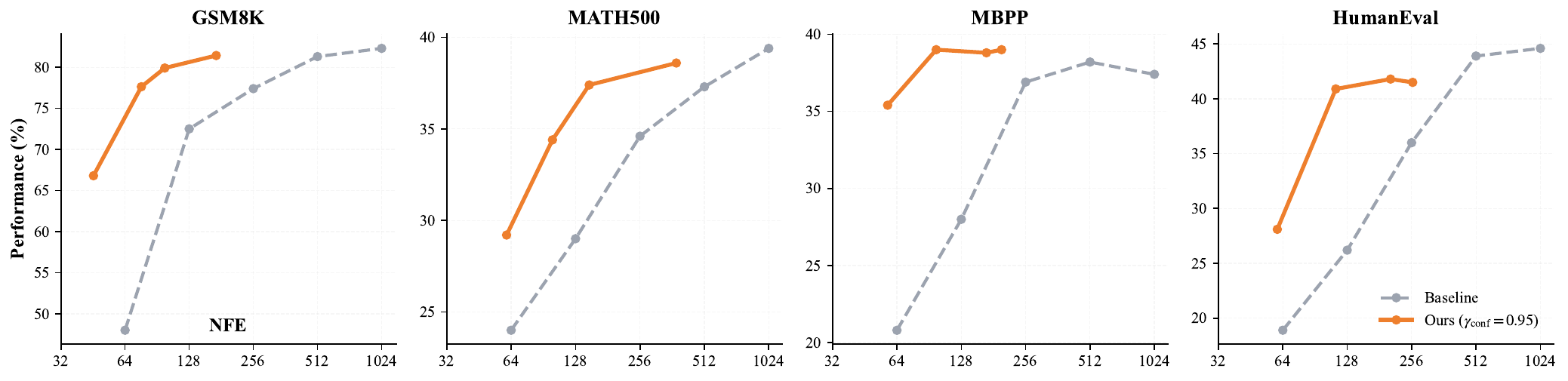}
    \caption{\textbf{Accuracy--compute trade-off under different decoding budgets.} We plot performance (\%) versus the average NFE across benchmarks. The dashed gray curve denotes the LLaDA-8B-Instruct baseline evaluated under increasing NFE budgets, while the solid orange curve denotes our CAD with the given threshold ($\gamma_{\text{conf}}=0.95$). Each marker corresponds to one decoding budget. Across all four benchmarks, our method yields a consistently better trade-off, achieving comparable or higher performance at substantially lower NFE.}

  \label{fig:bench}
\end{figure}

\begin{table*}[t]
  \centering
  \begin{tabular}{llccc}
    \hline
    \textbf{Task} & \textbf{Benchmark} & \textbf{LLaDA-8B-Ins} & \textbf{Ours} & \textbf{Speedup} \\
    \hline
    \multirow{6}{*}{\textbf{Code}}
      & HumanEval (pass@1 \%) & 38.7 \nfe{256} & \textbf{40.9} \nfe{113.2} & 3.30$\times$ \\
      & HumanEval (pass@5 \%) & 51.2 \nfe{256} & \textbf{52.4}\nfe{113.9} & 3.26$\times$ \\
      & HumanEval-plus (pass@1 \%) & 31.7 \nfe{256} & \textbf{32.9} \nfe{115.0} & 3.25$\times$ \\
      & HumanEval-plus (pass@5 \%) & 42.7 \nfe{256} & \textbf{43.9} \nfe{113.3} & 3.27$\times$ \\
      & MBPP-plus (pass@1 \%) & \textbf{48.7} \nfe{256} & 47.9 \nfe{108.0} & 3.51$\times$ \\
      & MBPP-plus (pass@5 \%) & \textbf{68.8} \nfe{256} & 67.7 \nfe{108.8} & 3.41$\times$ \\
    \hline
    \multirow{1}{*}{\textbf{Math}}
      & MATH500 (Acc \%) & 37.3 \nfe{512} & \textbf{38.6} \nfe{148.3} & 5.33$\times$  \\
    \hline
    \multicolumn{2}{l}{\textbf{Avg.}} 
        & 45.6 \nfe{292.6}  & \textbf{46.3} \nfe{117.2} & 3.62$\times$  \\
    \hline
  \end{tabular}
  
 \caption{\textbf{Overall performance on code and mathematics benchmarks.} We report pass@1/5 on code benchmarks and accuracy on MATH500. The \nfe{nfe} next to each score denotes the Avg. NFE for that setting. Speedup is measured by wall-clock time relative to LLaDA-8B-Instruct under the same evaluation protocol. \textbf{Avg.} denotes the unweighted mean across the listed benchmarks.}

  \label{tab:benchmark}
\end{table*}

\paragraph{Consistent efficiency gains.}
Fig.~\ref{fig:bench} visualizes the accuracy-compute frontier across all four benchmarks. 
The curve of our CAD ($\gamma_{\text{conf}}=0.95$) consistently dominates the LLaDA baseline (dashed gray), shifting the frontier upward and leftward. 
This indicates that our method already attains strong performance in low-NFE regimes whereas the baseline remains compute-limited. This dominance holds across diverse domains (code vs.\ mathematics) without tuning the decoding budget per benchmark. This behavior is consistent with the end-to-end speedups reported in Tables~\ref{tab:gsm8k}, \ref{tab:mbpp}, and \ref{tab:benchmark}, where reducing NFE does not incur the accuracy degradation typically observed with heuristic step reduction.

\begin{figure}[t]
  \centering
  \includegraphics[width=0.8\linewidth]{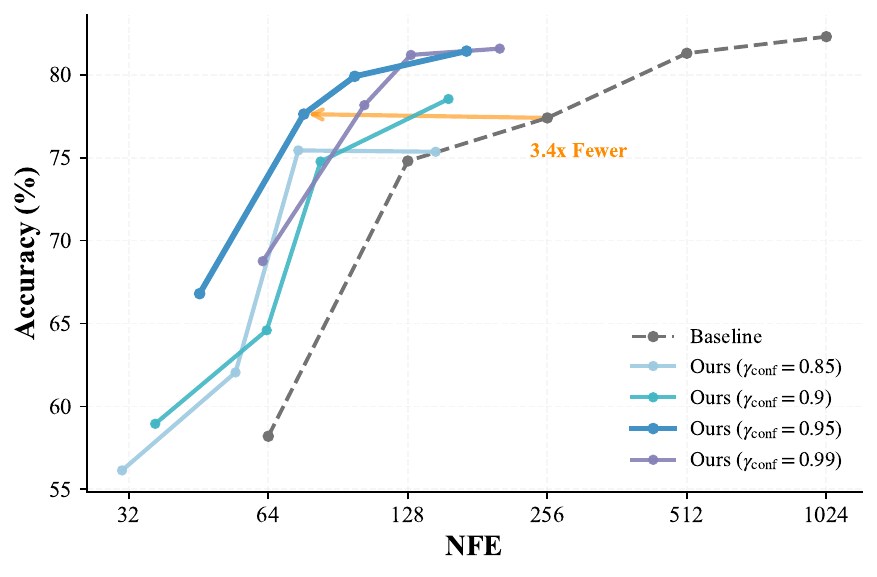}
    \caption{\textbf{Accuracy-compute Pareto frontier on GSM8K.} 
    The dashed gray curve represents the LLaDA baseline.
    \textbf{Strict Dominance:} Our method (solid curves) consistently pushes the frontier upwards and leftwards. 
    The orange arrow highlights our selected operating point ($\gamma_{\text{conf}}=0.95$, dark blue curve), which achieves a \textbf{3.4$\times$ speedup} while maintaining accuracy comparable to the baseline.
    While a higher threshold ($\gamma_{\text{conf}}=0.99$) prioritizes quality, $\gamma_{\text{conf}}=0.95$ strikes an optimal efficiency-quality balance.}
\label{fig:tradeoff}
\end{figure}

\paragraph{Pareto-optimal trade-off.}
To verify that these gains are not artifacts of a specific parameter setting, Fig.~\ref{fig:tradeoff} illustrates the full accuracy-efficiency frontier on GSM8K by sweeping the confidence threshold $\gamma_{\text{conf}} \in [0.85, 0.99]$. This analysis reveals the \textit{controllability} and \textit{robustness} of our approach:
\begin{itemize}
    \item \textbf{Strict dominance:} As shown in Fig.~\ref{fig:tradeoff}, our curves (colored lines) consistently lie above the baseline trajectory (dashed gray). This implies a strict Pareto improvement: For any target accuracy, our method reduces NFE.
    \item \textbf{Flexible deployment:} The convex hull formed by our method enables users to seamlessly trade compute for quality. 
    A lower threshold ($\gamma_{\text{conf}}=0.85$, light blue line) offers aggressive speedups for latency-critical scenarios. 
    Conversely, a conservative threshold ($\gamma_{\text{conf}}=0.99$, purple line) prioritizes maximum quality.
    Our default setting ($\gamma_{\text{conf}}=0.95$) strikes a strong balance, requiring $\approx$ 3.4$\times$ smaller NFE to reach comparable accuracy (NFE ratio) and identifying the elbow of the accuracy-compute curve.
\end{itemize}
This confirms that the reported speedups are not artifacts of hyperparameter tuning, but a systemic advantage of confidence-adaptive allocation.

Furthermore, our ablation study presented in Appendix~\ref{sec:appendix-ablation} confirms that the proposed DSCD objective is essential for this efficiency, as replacing it with standard supervised fine-tuning (SFT) leads to accuracy collapse under aggressive decoding steps.

\subsection{Mechanism of Efficiency}
\label{sec:efficiency_mechanism}

To understand the source of the reported speedups, we present an analysis of the distribution of computational cost and its translation into wall-clock latency.

\begin{figure*}[t]
  \centering
  \includegraphics[width=\textwidth]{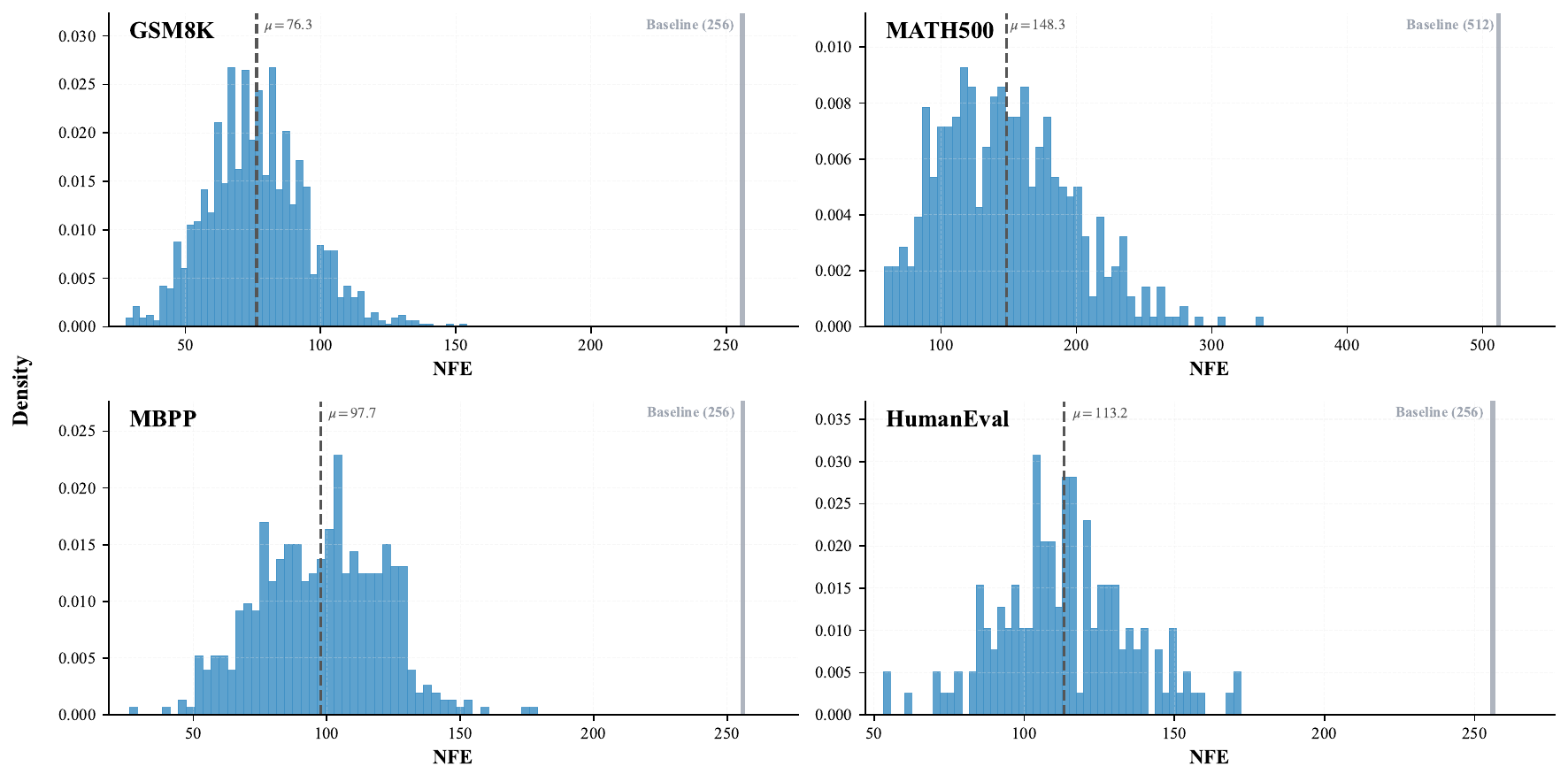} 
  \caption{\textbf{Per-sample compute allocation under adaptive decoding.} Histograms show the NFE distribution across test instances for GSM8K, MATH500, MBPP, and HumanEval using our adaptive decoding approach. The dashed vertical line marks the mean NFE ($\mu$), while the solid vertical line indicates the fixed NFE used by the baseline decoder (256 for GSM8K/MBPP/HumanEval and 512 for MATH500). Most samples terminate far earlier than the fixed-budget baseline, explaining the large reductions in average NFE reported in Tables~\ref{tab:gsm8k}, \ref{tab:mbpp}, and \ref{tab:benchmark}.}
  \label{fig:nfe_hist}
\end{figure*}

\paragraph{Adaptive compute allocation.}
Fig.~\ref{fig:nfe_hist} visualizes the NFE density. 
The distributions reveal that our method acts as a probe for intrinsic task complexity: GSM8K exhibits a sharp, low-variance peak ($\mu=76.3$), reflecting high model confidence, whereas MATH500 displays a broad, heavy-tailed distribution ($\mu=148.3$), adapting to diverse problem difficulties.
Crucially, the distributions strictly deviate from the fixed baseline budget (gray lines), indicating that standard diffusion decoding is systematically over-parameterized. Our method successfully reclaims this redundancy, terminating well before the fixed limit even for the hardest ``tail'' samples.

\paragraph{Translating step reduction to wall-clock speedup.}
A key finding in Table~\ref{tab:gsm8k} is that our end-to-end wall-clock speedup can exceed the reduction implied by the NFE ratio (e.g., $3.36\times$ smaller NFE vs.\ $5.18\times$ faster on GSM8K).
Since total latency $T \approx \text{NFE}\cdot t_{\text{step}}$, this implies that our gains come from both fewer denoising steps and a lower average per-step latency ($t_{\text{step}}$), making each functional evaluation computationally cheaper.
This behavior contrasts sharply with training-free baselines.
For instance, \citet{wu2025fast} report that their parallel decoding strategy achieves a 3.25$\times$ reduction in steps (tokens per step) but only yields a 2.46$\times$ wall-clock speedup, exhibiting sublinear scaling due to the overhead of inference-time selection heuristics.
In contrast, our DSCD training aligns the model with the CAD acceptance rule, enabling a streamlined, fully batched tensor implementation with minimal dynamic control flow.
Consequently, our algorithmic NFE reductions translate fully, and often superlinearly, into realized latency savings.

\paragraph{Emergence of hierarchical planning.}
Beyond quantitative speedups, the decoding trajectory (visualized in Appendix ~\ref{sec:appendix-decode-analysis} Fig.~\ref{fig:heatmap}) reveals that \modelname{} learns a \emph{hierarchical} generation strategy distinct from the linear left-to-right order of AR models.
We observe a clear temporal separation between structure and logic:

\begin{itemize}
    \item \textbf{Syntactic scaffolding (blue/green):} 
    Structural tokens, including Python keywords (\texttt{def}, \texttt{if}, \texttt{return}) and control flow indentations, are consistently finalized in the earliest inference steps ($t < 10$). 
    This suggests the model performs global planning first, establishing a high-confidence syntactic skeleton to constrain the solution space.

    \item \textbf{Logical refinement (yellow/red):} 
    Computationally intensive tokens, such as complex arithmetic expressions (e.g., \texttt{length = end - start + 1}) and conditional predicates, appear in warmer colors, indicating they are unmasked much later. 
    This confirms that CAD effectively focuses the compute budget on the ``hardest'' parts of the sequence, utilizing the fully visible syntactic context to resolve logical dependencies with higher precision.
\end{itemize}

This behavior demonstrates that our method effectively decouples \emph{global structural planning} from \emph{local logical execution}, enabling the model to ``sketch'' the solution before filling in intricate details: a key factor driving its efficiency and correctness on structured tasks.

This structural stability is further validated by our qualitative evaluation presented in Appendix~\ref{sec:appendix-llm-judge}, which demonstrates that our method significantly reduces token repetition and improves coherence scores compared to baselines employing aggressive fixed-step reduction.
\section{Discussions}
\label{sec:discussion_conclusion}

Next, we discuss limitations, challenges, and future directions.

\subsection{Limitations and Challenges}
While \modelname{} establishes a new Pareto frontier for DLM decoding efficiency, several limitations remain inherent to our current design.

\paragraph{Static canvas constraints.} 
First, like the underlying LLaDA backbone and most block-diffusion models, our approach relies on a pre-defined maximum sequence length ($L_{\text{gen}}$). Although CAD logically handles variable-length outputs via EOS blocking, the computational graph is statically allocated (e.g., padding to 256 tokens). This introduces memory redundancy when generating short sequences and imposes a hard boundary on long-context reasoning, preventing the model from generalizing to sequences longer than its training window.

\paragraph{Teacher-bounded reasoning.} 
Second, as a distillation framework, the student's capability is theoretically bounded by that of the teacher. 
While DSCD effectively adapts the student to low-NFE trajectories, it does not fundamentally inject new reasoning capabilities. If the teacher hallucinates or fails in complex logic, the student may mimic those errors, though we occasionally observe minor self-correction effects typical of consistency training.

\paragraph{Metric sensitivity in open-ended domains.} 
Third, our confidence-based acceptance relies on the assumption that low uncertainty correlates with correctness. This holds true for structured tasks like coding and mathematics (low-entropy targets) but may be overly conservative for high-entropy tasks, such as creative writing, where ambiguity is natural. 
Strict confidence thresholding in such domains might stifle diversity or lead to repetitive outputs.

\subsection{Future Directions}
The above limitations suggest promising avenues for future research to further democratize non-autoregressive generation.

\paragraph{Dynamic and infinite-context diffusion.} 
To overcome the fixed canvas limitation, future work could integrate our adaptive decoding method with dynamic windowing mechanisms. Recent work by \citet{li2025beyond} proposes extensions of diffusion generation to arbitrary lengths via semi-autoregressive context shifting. Combining our confidence-adaptive logic with such dynamic frameworks could yield a fully flexible diffusion decoder that supports infinite-context generation without pre-allocated buffers.

\paragraph{Beyond imitation via on-policy training.} 
To break the teacher performance ceiling, future research could explore on-policy refinement. Instead of purely mimicking a frozen teacher, the student could be fine-tuned via reinforcement learning  using the efficiency benefits of CAD to explore diverse trajectories, potentially surpassing the teacher by optimizing for correctness rather than mere consistency.

\paragraph{Integration with KV caching.}
Although our current implementation recomputes the full context at each step, prior work like Fast-dLLM \citep{wu2025fast} has demonstrated the effectiveness of approximate KV caching for DLMs. Since our method is compatible with caching mechanisms, integrating them could further amortize the computational cost of the backbone. We hypothesize that combining our algorithmic NFE reduction with efficient memory management would yield even greater wall-clock speedups.

\section{Conclusion}
In this work, we introduced \modelname{}, a unified framework that reconciles the structural mismatch between diffusion training schedules and inference latency requirements. 
By coupling DSCD with CAD, we successfully decoupled the model's generation trajectory from rigid pre-defined schedules. 
Our extensive evaluation on GSM8K, HumanEval, and MBPP demonstrates that \modelname{} achieves a strict Pareto improvement over standard diffusion baselines, delivering a $3\times$-$5\times$ speedup without sacrificing accuracy. 
These results confirm that treating diffusion models as flexible, instance-aware refinement operators, rather than fixed-schedule denoisers, is a viable path toward making non-autoregressive generation practical for real-world structured reasoning tasks.

\bibliographystyle{plainnat}
\bibliography{custom}

\clearpage
\appendix

\counterwithin{figure}{section}
\counterwithin{table}{section}

\renewcommand{\thefigure}{\thesection.\arabic{figure}}
\renewcommand{\thetable}{\thesection.\arabic{table}}

\section{A Martingale Projection View of DSCD}
\label{app:dscd_theory}

In this section, we provide a formal derivation that connects our DSCD method to the Martingale theory on filtrations \citep{vincent2011connection}.

\subsection{Setup and Information Ordering}

Let $(\Omega, \mathcal{F}, P)$ be the probability space that supports all random variables in our construction. Let $Z \in \mathcal{Z}$ denote the clean (ground-truth) token sequence. The absorbing masking process of LLaDA defines, for each masking pattern $m$, a partially masked sequence $X^{(m)}$ obtained by replacing a subset of token positions in $Z$ with the mask token~$M$.

To simplify notation in this theoretical analysis, we denote the random variables corresponding to the student view $\tilde{z}^{\text{S}}$ and teacher view $\tilde{z}^{\text{T}}$ (defined in Sec.~\ref{sec:method:dscd}) simply as $S$ and $T$. We define:
\[ S = X^{(m_{\text{S}})}, \qquad T = X^{(m_{\text{T}})}. \]

We write
\[
\mathcal{F}_S = \sigma(S), \qquad \mathcal{F}_T = \sigma(T)
\]
for the $\sigma$-algebras generated by the corresponding observed (unmasked) tokens.

In our nested masking design, the teacher always sees a superset of the tokens visible to the student. Concretely, if $\mathcal{M}_{\text{S}}$ and $\mathcal{M}_{\text{T}}$ denote the random sets of \emph{masked} positions for the student and teacher, we enforce
\[
\mathcal{M}_{\text{T}} \subseteq \mathcal{M}_{\text{S}},
\]
so that every token unmasked for the student is also unmasked for the teacher. Equivalently, the teacher's observation $T$ is a measurable function of $(S,Z)$, and the associated $\sigma$-algebras are ordered as $\mathcal{F}_S \subseteq \mathcal{F}_T$.

\begin{definition}[Information ordering]
\label{def:info_order}
We say that the teacher is informationally richer than the student, and write $S \preceq T$, if
\[
\mathcal{F}_S \subseteq \mathcal{F}_T,
\]
i.e., every event that is measurable with respect to the student state $S$ is also measurable with respect to the teacher state $T$. Under our nested masking scheme $\mathcal{M}_{\text{T}} \subseteq \mathcal{M}_{\text{S}}$, we have $S \preceq T$ by construction.
\end{definition}

\subsection{Consistency as Projection onto Student Information}
We now formalize how DSCD can be viewed as projecting the teacher's conditional distribution onto the information available to the student. Let $Z \in \mathcal{Z}$ denote the clean (ground-truth) sequence, and let $S$ and $T$ be the student and teacher states as in Definition~\ref{def:info_order}, with $\mathcal{F}_S = \sigma(S)$ and $\mathcal{F}_T = \sigma(T)$ satisfying $S \preceq T$. 

Given a student state $S = s$, the DSCD objective compares the student distribution $p_\theta(\cdot \mid s)$ against the teacher distribution $p_\phi(\cdot \mid T)$ averaged over the teacher states $T$ that are compatible with $S = s$. Formally, the DSCD consistency loss can be written as
\begin{equation}
\label{eq:dscd_loss_simple}
\mathcal{L}_{\text{DSCD}}(\theta)=\mathbb{E}_{S}\mathbb{E}_{T \mid S}\big[D_{\text{KL}}(p_\phi(\cdot \mid T)\|p_\theta(\cdot \mid S))\big],
\end{equation}
where the outer expectation is taken over the student states induced by the diffusion process and masking policy, and the inner expectation is taken over the corresponding teacher states. For later use, we define the teacher-induced \emph{projected} distribution
\begin{equation}
\label{eq:Q_def}
Q(z \mid S = s):=\mathbb{E}_{T \mid S = s}[p_\phi(z \mid T)],
\end{equation}
i.e., $Q(\cdot \mid s)$ is obtained by averaging the teacher's conditional distributions over all teacher states $T$ consistent with the student state $S = s$. Equivalently, $Q(\cdot \mid S)$ is the conditional expectation of $p_\phi(\cdot \mid T)$ with respect to the $\sigma$-algebra $\mathcal{F}_S$.
\begin{proposition}[DSCD as projection onto $\mathcal{F}_S$]
\label{prop:dscd_projection}
Assume the student model class $\{p_\theta(\cdot \mid S)\}$ is rich enough to represent any conditional distribution over $\mathcal{Z}$ for each $S = s$. Then the DSCD loss \eqref{eq:dscd_loss_simple} is minimized by
\begin{equation}
\label{eq:optimal_student}
p_\theta^*(z \mid S = s)=Q(z \mid S = s)=\mathbb{E}_{T \mid S = s}[p_\phi(z \mid T)],\quad \forall s\in\mathcal{S}.
\end{equation}
In particular, $p_\theta^*(\cdot \mid S)$ coincides with the conditional expectation of the teacher's conditional distribution $p_\phi(\cdot \mid T)$ onto the coarser $\sigma$-algebra $\mathcal{F}_S$.
\end{proposition}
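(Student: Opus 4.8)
The plan is to recognize Proposition~\ref{prop:dscd_projection} as an instance of the well-known fact that the (reverse) KL divergence $D_{\text{KL}}(p \,\|\, q)$, averaged over a family of targets $p$, is minimized by the mixture (barycenter) of those targets. Concretely, I would fix a student state $S=s$ and study the inner optimization $\min_{q(\cdot)} \mathbb{E}_{T\mid S=s}\big[D_{\text{KL}}(p_\phi(\cdot\mid T)\,\|\,q(\cdot))\big]$ over all probability distributions $q$ on $\mathcal{Z}$. Expanding the KL and dropping the $q$-independent entropy term $\mathbb{E}_{T\mid S=s}\big[\sum_z p_\phi(z\mid T)\log p_\phi(z\mid T)\big]$, the objective reduces (up to an additive constant) to the cross-entropy $-\sum_z \big(\mathbb{E}_{T\mid S=s}[p_\phi(z\mid T)]\big)\log q(z) = -\sum_z Q(z\mid s)\log q(z)$, where I have swapped the (finite, or dominated) sum over $z$ with the expectation over $T$. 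Because the student class is assumed rich enough to realize any conditional distribution independently at each $s$, the pointwise minimizers assemble into a global minimizer, so it suffices to solve this cross-entropy minimization at each $s$.

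Second, I would carry out the cross-entropy minimization itself: minimizing $-\sum_z Q(z\mid s)\log q(z)$ over the probability simplex is a standard Gibbs-inequality argument. Adding and subtracting $\log Q(z\mid s)$ rewrites the objective as $D_{\text{KL}}(Q(\cdot\mid s)\,\|\,q(\cdot)) + H(Q(\cdot\mid s))$, where the entropy term is constant in $q$ and the KL term is nonnegative, vanishing iff $q(\cdot)=Q(\cdot\mid s)$. (Here I need $Q(z\mid s)$ to be a genuine probability distribution over $\mathcal{Z}$, which is immediate since it is a convex combination of the probability distributions $p_\phi(\cdot\mid T)$; and I need the entropy to be finite, which I would handle by the standing discreteness/finiteness assumptions on $\mathcal{Z}$, or note that the argument goes through whenever the relevant quantities are finite.) This identifies the unique minimizer as $p_\theta^*(\cdot\mid S=s)=Q(\cdot\mid s)=\mathbb{E}_{T\mid S=s}[p_\phi(\cdot\mid T)]$.

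Third, I would interpret $Q(\cdot\mid S)$ as a conditional expectation to make the ``projection onto $\mathcal{F}_S$'' statement precise. Viewing $p_\phi(\cdot\mid T)$ as an $\mathcal{F}_T$-measurable $\mathcal{Z}$-valued (probability-vector-valued) random element, the definition $Q(z\mid S)=\mathbb{E}_{T\mid S}[p_\phi(z\mid T)] = \mathbb{E}[\,p_\phi(z\mid T)\mid \mathcal{F}_S\,]$ is exactly the conditional expectation, coordinate-by-coordinate in $z$, onto the coarser $\sigma$-algebra $\mathcal{F}_S$ (recall $\mathcal{F}_S\subseteq\mathcal{F}_T$ by Definition~\ref{def:info_order}). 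This is the $L^2$-projection characterization, hence the martingale-projection language used in the main text; I would state this as the concluding remark rather than reproving properties of conditional expectation.

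The only real subtlety — and the main obstacle — is the interchange of the optimization order: I am asserting that minimizing the \emph{outer} expectation $\mathbb{E}_S\mathbb{E}_{T\mid S}[\,\cdot\,]$ over a \emph{single} function $\theta\mapsto p_\theta$ is achieved by minimizing the inner conditional objective separately at each $s$. This is legitimate precisely because (i) the integrand is nonnegative, so Tonelli lets me treat the outer $\mathbb{E}_S$ as an integral that is minimized by minimizing the integrand pointwise, and (ii) the richness assumption on $\{p_\theta(\cdot\mid S)\}$ guarantees the pointwise minimizers are jointly realizable by some admissible $\theta$ (in the nonparametric limit). I would spell out these two points explicitly, since without the richness assumption the proposition is false in general. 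Everything else is routine convexity of KL in its second argument and the Gibbs inequality.
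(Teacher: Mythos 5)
Your proposal is correct and follows essentially the same route as the paper's proof: expand the KL, discard the $\theta$-independent entropy term, swap the expectation over $T\mid S$ into the sum over $z$ to obtain a cross-entropy against $Q(\cdot\mid s)$, and conclude by the Gibbs inequality that the pointwise maximizer is $Q$ itself. Your extra care about the Tonelli/pointwise-minimization interchange and the realizability of the pointwise minimizers is a welcome tightening of a step the paper leaves implicit, but it does not constitute a different argument.
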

\begin{proof}
By expanding the KL divergence in \eqref{eq:dscd_loss_simple}, we obtain
\begin{align}
\mathcal{L}_{\text{DSCD}}(\theta) &= \mathbb{E}_{S,T}\!\left[D_{\text{KL}}(p_\phi(\cdot \mid T)\|p_\theta(\cdot \mid S))\right] \\
&= \mathbb{E}_{S,T}\!\left[\sum_{z\in\mathcal{Z}} p_\phi(z \mid T)\log\frac{p_\phi(z \mid T)}{p_\theta(z \mid S)}\right] \\
&= \underbrace{\mathbb{E}_{S,T}\!\left[\sum_{z\in\mathcal{Z}} p_\phi(z \mid T)\log p_\phi(z \mid T)\right]}_{\text{constant w.r.t. }\theta} - \mathbb{E}_{S,T}\!\left[\sum_{z\in\mathcal{Z}} p_\phi(z \mid T)\log p_\theta(z \mid S)\right].
\end{align}
The first term does not depend on $\theta$ and can be dropped. Thus, minimizing \eqref{eq:dscd_loss_simple} is equivalent to maximizing
\begin{equation}
\label{eq:max_cross_entropy}
\mathbb{E}_{S,T}\!\left[\sum_{z\in\mathcal{Z}} p_\phi(z \mid T)\log p_\theta(z \mid S)\right] = \mathbb{E}_{S}\!\left[\sum_{z\in\mathcal{Z}} \mathbb{E}_{T \mid S}[p_\phi(z \mid T)] \log p_\theta(z \mid S)\right].
\end{equation}
For each fixed student state $S=s$, the inner objective becomes
\begin{equation}
\sum_{z\in\mathcal{Z}} Q(z \mid s)\log p_\theta(z \mid s),
\end{equation}
which is maximized when $p_\theta(\cdot \mid s)=Q(\cdot \mid s)$. Therefore, $p_\theta^*(\cdot\mid S)=Q(\cdot\mid S)$.
\end{proof}
This proposition shows that DSCD does not force the student to reproduce the teacher's full diffusion trajectory. Instead, the student learns the teacher's conditional distribution after integrating out the extra information available only to the teacher, i.e., after projecting onto the coarser information $\sigma$-algebra $\mathcal{F}_S$. This implies that the student learns to directly predict the \emph{expectation} of the teacher's multi-step denoising result, effectively compressing the diffusion trajectory into a single forward pass.

\section{Variance Reduction Analysis of Teacher-Subset Masking}
\label{app:variance}

In this section, we provide a formal proof that the \emph{teacher-subset masking} strategy reduces the variance of the consistency loss compared to an \emph{independent masking} strategy.

\subsection{Problem Setup}
Let $z$ be the ground truth sequence. The student observes a partial view $\tilde{z}^{\text{S}}$ with mask $\mathcal{M}_{\text{S}}$. The student's goal is to minimize the divergence from a target distribution provided by the teacher.
Let the teacher's view be $\tilde{z}^{\text{T}}$ with mask $\mathcal{M}_{\text{T}}$. The teacher's prediction is a random variable $Y = p_\phi(\cdot \mid \tilde{z}^{\text{T}})$, which serves as the regression target for the student.

We compare two masking schemes:
\begin{itemize}
    \item \textbf{Teacher-subset masking:} $\mathcal{M}_{\text{T}} \subseteq \mathcal{M}_{\text{S}}$. The teacher observes everything the student observes, plus additional tokens $\Delta = \mathcal{M}_{\text{S}} \setminus \mathcal{M}_{\text{T}}$.
    \item \textbf{Independent masking:} $\mathcal{M}_{\text{T}}$ is sampled independently. Crucially, there exists a non-empty set of ``blind'' tokens $\mathcal{B} = \mathcal{M}_{\text{T}} \cap (\mathcal{V} \setminus \mathcal{M}_{\text{S}})$ that are \emph{visible to the student} but \emph{masked for the teacher}.
\end{itemize}

\subsection{Variance Reduction via Rao-Blackwellization}

We now make the variance reduction claim precise. Let $U$ be a random target quantity that the teacher aims to approximate (e.g., a one-hot encoding of the ground-truth token or a sufficient statistic thereof). For any information $\sigma$-algebra $\mathcal{G}$, the Bayes-optimal predictor based on $\mathcal{G}$ is the conditional expectation
\[
  Y^{(\mathcal{G})} = \mathbb{E}[U \mid \mathcal{G}].
\]

The student observes a state $\tilde{z}^{\text{S}}$ and the corresponding $\sigma$-algebra $\mathcal{F}_{\text{S}} = \sigma(\tilde{z}^{\text{S}})$. We consider two teacher information sets:

\begin{itemize}
  \item \textbf{Teacher-subset masking:}
    The teacher observes all information available to the student plus additional tokens. Denote the teacher's state by $\tilde{z}^{\text{T,sub}}$ and the associated $\sigma$-algebra by $\mathcal{F}_{\text{T}}^{\text{sub}}$, with $\mathcal{F}_{\text{S}} \subseteq \mathcal{F}_{\text{T}}^{\text{sub}}$. The Bayes-optimal teacher prediction is
    \[
      Y^{\text{sub}} = \mathbb{E}[U \mid \mathcal{F}_{\text{T}}^{\text{sub}}].
    \]

  \item \textbf{Independent masking:}
    The teacher's mask is sampled independently of the student's mask; hence, the teacher state $\tilde{z}^{\text{T,ind}}$ may hide a subset of tokens that are visible to the student. Let $\mathcal{F}_{\text{T}}^{\text{ind}} = \sigma(\tilde{z}^{\text{T,ind}})$ be the corresponding information set. The Bayes-optimal independent teacher prediction is
    \[
      Y^{\text{ind}} = \mathbb{E}[U \mid \mathcal{F}_{\text{T}}^{\text{ind}}].
    \]
\end{itemize}

While the teacher model $p_\phi$ is not the true Bayes-optimal posterior of the data, it serves as the ground-truth definition for the distillation task. Under this view, subset masking reduces the variance of the gradient estimator relative to the distillation target.

In the teacher-subset scheme, the teacher's information strictly refines the independent teacher's information once we condition on the student's view:
\[
  \sigma(\mathcal{F}_{\text{S}}, \mathcal{F}_{\text{T}}^{\text{ind}}) \subseteq \mathcal{F}_{\text{T}}^{\text{sub}}.
\]

The following lemma is a conditional version of the Rao-Blackwell theorem.

\begin{lemma}[Conditional Rao-Blackwellization]
\label{lem:conditional_rb}
Let $U \in L^2(\Omega,\mathcal{F},P)$ and let $\mathcal{G} \subseteq \mathcal{H} \subseteq \mathcal{F}$ be $\sigma$-algebras. Define
\[
  Y_{\mathcal{G}} = \mathbb{E}[U \mid \mathcal{G}], \qquad Y_{\mathcal{H}} = \mathbb{E}[U \mid \mathcal{H}].
\]
Then for any $\sigma$-algebra $\mathcal{K} \subseteq \mathcal{G}$,
\[
  \Var(Y_{\mathcal{H}} \mid \mathcal{K}) \le \Var(Y_{\mathcal{G}} \mid \mathcal{K}) \quad\text{almost~surely~(a.s.)}
\]
with equality iff $Y_{\mathcal{G}} = Y_{\mathcal{H}}$. 
\end{lemma}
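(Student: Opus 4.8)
The plan is to prove the inequality as a conditional Pythagorean identity in $L^2$, being explicit at the outset about the one genuine subtlety, namely which quantity $\Var(\cdot\mid\mathcal{K})$ denotes. Consistently with the phrase ``variance of the gradient estimator relative to the distillation target'' used in Appendix~\ref{app:variance}, I read it as the conditional mean-squared deviation of the estimator from the target it approximates, i.e. $\Var(Y_{\bullet}\mid\mathcal{K}) := \mathbb{E}[(U - Y_{\bullet})^2 \mid \mathcal{K}]$. This is the quantity that actually controls the distillation gradient's noise, and it equals the $\mathcal{K}$-averaged residual variance $\mathbb{E}[\Var(U\mid\cdot)\mid\mathcal{K}]$ because $\mathbb{E}[(U-Y_{\mathcal{H}})^2\mid\mathcal{H}] = \Var(U\mid\mathcal{H})$. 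It is crucial to commit to this reading before any computation, because under the alternative reading as the variance of the conditional expectation $Y_{\bullet}$ about its own mean, the finer $\sigma$-algebra $\mathcal{H}$ yields the \emph{larger} value and the inequality reverses; the error-relative-to-target reading is the one under which the statement as worded holds.

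First I would record the two structural facts supplied by $\mathcal{K}\subseteq\mathcal{G}\subseteq\mathcal{H}$. Since $\mathcal{G}\subseteq\mathcal{H}$, the tower property gives $Y_{\mathcal{G}} = \mathbb{E}[Y_{\mathcal{H}}\mid\mathcal{G}]$, so the refinement increment $d := Y_{\mathcal{H}} - Y_{\mathcal{G}}$ is $\mathcal{H}$-measurable, while the residual $U - Y_{\mathcal{H}}$ is $\mathcal{H}$-orthogonal in the sense $\mathbb{E}[U - Y_{\mathcal{H}}\mid\mathcal{H}] = 0$. Next I decompose the coarse error as $U - Y_{\mathcal{G}} = (U - Y_{\mathcal{H}}) + d$ and expand the square. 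The cross term vanishes after conditioning on $\mathcal{K}$: computing the conditional expectation through $\mathcal{H}$ (legitimate because $\mathcal{K}\subseteq\mathcal{H}$) gives $\mathbb{E}[(U-Y_{\mathcal{H}})\,d\mid\mathcal{K}] = \mathbb{E}\big[d\,\mathbb{E}[U-Y_{\mathcal{H}}\mid\mathcal{H}]\mid\mathcal{K}\big] = 0$, where I pulled out the $\mathcal{H}$-measurable factor $d$.

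This yields the identity $\Var(Y_{\mathcal{G}}\mid\mathcal{K}) = \Var(Y_{\mathcal{H}}\mid\mathcal{K}) + \mathbb{E}[d^2\mid\mathcal{K}]$, and since $\mathbb{E}[d^2\mid\mathcal{K}]\ge 0$ almost surely, the stated inequality $\Var(Y_{\mathcal{H}}\mid\mathcal{K}) \le \Var(Y_{\mathcal{G}}\mid\mathcal{K})$ follows. For the equality case, equality holds a.s. iff the nonnegative remainder $\mathbb{E}[d^2\mid\mathcal{K}]$ vanishes a.s.; taking a further unconditional expectation then forces $\mathbb{E}[d^2]=0$, i.e. $d = 0$ a.s., which is exactly $Y_{\mathcal{G}} = Y_{\mathcal{H}}$, the claimed criterion (the converse being immediate). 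I expect the main obstacle to be purely the sign/interpretation issue flagged above: the orthogonality computation is routine, but the lemma is only true for the error-relative-to-target reading of $\Var$, so the proof must fix that reading first. I would also note that for the vector-valued (probability-simplex) targets of the application the same identity holds coordinatewise, equivalently for the trace of the conditional covariance, leaving the direction of the inequality unchanged.
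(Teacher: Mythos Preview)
Your Pythagorean decomposition via the tower property is exactly the paper's argument: both arrive at the identity $\Var(Y_{\mathcal{G}}\mid\mathcal{K}) = \Var(Y_{\mathcal{H}}\mid\mathcal{K}) + \mathbb{E}[(Y_{\mathcal{G}}-Y_{\mathcal{H}})^2\mid\mathcal{K}]$ and read off the inequality and equality case from the nonnegative remainder. The substantive difference is that you explicitly flag and resolve an interpretational issue the paper leaves silent. Under the standard reading of $\Var(Y_{\bullet}\mid\mathcal{K})$ as the conditional variance of the random variable $Y_{\bullet}$ about its $\mathcal{K}$-mean, the inequality actually \emph{reverses}: the conditional law of total variance gives $\Var(Y_{\mathcal{H}}\mid\mathcal{K}) = \Var(Y_{\mathcal{G}}\mid\mathcal{K}) + \mathbb{E}[\Var(Y_{\mathcal{H}}\mid\mathcal{G})\mid\mathcal{K}] \ge \Var(Y_{\mathcal{G}}\mid\mathcal{K})$, so the paper's displayed identity is false in that reading. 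Your commitment to the mean-squared-error-to-target reading, $\Var(Y_{\bullet}\mid\mathcal{K}) := \mathbb{E}[(U-Y_{\bullet})^2\mid\mathcal{K}]$, is precisely what makes the stated direction hold, and it is the reading consistent with the Rao--Blackwell intuition and with the paper's downstream use as ``variance of the distillation target.'' In short, the mechanics match, but your version is strictly more careful: the paper asserts the key identity without disambiguating the notation, whereas you correctly isolate the only interpretation under which the lemma as worded is true.
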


\begin{proof}
Fix $\mathcal{K} \subseteq \mathcal{G}$. Using the tower property and orthogonality of conditional expectations in $L^2$, we can write
\[
  Y_{\mathcal{G}} - Y_{\mathcal{H}} = (Y_{\mathcal{G}} - \mathbb{E}[Y_{\mathcal{H}}\mid\mathcal{G}]) + (\mathbb{E}[Y_{\mathcal{H}}\mid\mathcal{G}] - Y_{\mathcal{H}}).
\]
Since $\mathcal{G} \subseteq \mathcal{H}$, we have $\mathbb{E}[Y_{\mathcal{H}}\mid\mathcal{G}] = Y_{\mathcal{G}}$; hence, the first term is zero and
\[
  Y_{\mathcal{G}} - Y_{\mathcal{H}} = Y_{\mathcal{G}} - \mathbb{E}[U\mid\mathcal{H}].
\]
Taking conditional expectations with respect to $\mathcal{K}$ and using $\mathcal{K} \subseteq \mathcal{G}$ gives
\[
  \Var(Y_{\mathcal{G}}\mid\mathcal{K}) = \Var(Y_{\mathcal{H}}\mid\mathcal{K}) + \mathbb{E}\!\left[(Y_{\mathcal{G}} - Y_{\mathcal{H}})^2 \mid \mathcal{K}\right].
\]
The second term is nonnegative and vanishes iff $Y_{\mathcal{G}} = Y_{\mathcal{H}}$ a.s.
\end{proof}

We now instantiate Lemma~\ref{lem:conditional_rb} with
\[
  \mathcal{K} = \mathcal{F}_{\text{S}}, \quad \mathcal{G} = \mathcal{F}_{\text{T}}^{\text{ind}}, \quad \mathcal{H} = \mathcal{F}_{\text{T}}^{\text{sub}}.
\]
By construction, $\mathcal{K} \subseteq \mathcal{G} \subseteq \mathcal{H}$ and
\[
  Y^{\text{ind}} = Y_{\mathcal{G}}, \qquad Y^{\text{sub}} = Y_{\mathcal{H}}.
\]
Therefore, Lemma~\ref{lem:conditional_rb} implies
\begin{equation}
  \Var(Y^{\text{sub}} \mid \mathcal{F}_{\text{S}}) \le \Var(Y^{\text{ind}} \mid \mathcal{F}_{\text{S}}) \quad\text{a.s.}
\end{equation}
whenever the teacher is Bayes-optimal with respect to its information set. Moreover, the inequality is strict as soon as $\mathcal{F}_{\text{T}}^{\text{ind}} \subsetneq \mathcal{F}_{\text{T}}^{\text{sub}}$.

In words, teacher-subset masking \emph{Rao-Blackwellizes} the distillation target by conditioning on the student's full observation and the additional tokens revealed by the nested mask, thereby reducing the conditional variance of the teacher signal seen by the student.

\section{Theoretical Analysis of Confidence-Adaptive  Decoding (CAD)}
\label{app:cad_theory}

This appendix provides (i) a decision-theoretic motivation for CAD as a greedy solver of a per-step risk-efficiency trade-off with trust-region and liveness constraints, and (ii) basic guarantees on termination, step bounds, and computational overhead. 

This derivation shows that confidence thresholding is optimal for an additive risk-efficiency objective, and that under cardinality constraints, the optimal solution is a confidence-sorted prefix. CAD implements a greedy approximation with explicit trust-region and liveness constraints.

\subsection{Notation and Eligible Set}
Let the target length be $L_{\text{gen}}$. At decoding step $s$, the current partially-masked state is $\tilde{z}^{(s)}\in\mathcal{V}^{L_{\text{gen}}}$, and the masked index set is
$\mathcal{M}^{(s)}=\{i\in[L_{\text{gen}}]: \tilde{z}^{(s)}_i=\mathtt{m}\}$.
For block diffusion with block size $b$, define blocks
\[
\mathcal{B}_j=\{(j-1)b+1,\ldots,\min(jb,L_{\text{gen}})\},\quad j=1,\ldots,J,\; J=\lceil L_{\text{gen}}/b\rceil,
\]
and maintain an active block index $j$ (decoded left-to-right).
CAD operates on the \emph{eligible set}
\begin{equation}
\mathcal{E}^{(s)} \;=\; \mathcal{M}^{(s)} \cap \mathcal{B}_j,
\label{eq:eligible-set-app}
\end{equation}
i.e., masked positions within the active block. Pure diffusion is recovered as a special case by setting $b=L_{\text{gen}}$ (a single block), in which case $\mathcal{E}^{(s)}=\mathcal{M}^{(s)}$.

\subsection{Decision-theoretic Derivation (Per-step Objective)}
At step $s$, the model produces a confidence score for each eligible position $i\in\mathcal{E}^{(s)}$,
$\;c_i^{(s)}\in[0,1]$.
We interpret $c_i^{(s)}$ as a proxy for the probability that the current argmax token at position $i$ is correct:
\begin{equation}
    c_i^{(s)} \approx \mathbb{P}(\text{correct}_i\mid \tilde{z}^{(s)}).
    \label{eq:calibrated-confidence-app}
\end{equation}
Let $\mathcal{V}^{(s)}\subseteq \mathcal{E}^{(s)}$ denote the \emph{valid} candidates at step $s$ (e.g., after excluding padding or other structurally invalid positions; by default $\mathcal{V}^{(s)}=\mathcal{E}^{(s)}$).
The decision variable is a commit set $\mathcal{S}\subseteq\mathcal{V}^{(s)}$.

We define two opposing per-step objectives:
(i) \textbf{risk}: the expected number of incorrect tokens committed at this step,
$\mathcal{R}(\mathcal{S})=\sum_{i\in\mathcal{S}}(1-c_i^{(s)})$,
and (ii) \textbf{efficiency}: the number of committed tokens,
$\mathcal{E}(\mathcal{S})=|\mathcal{S}|$.
We combine them into a net cost
\begin{equation}
    \mathcal{L}(\mathcal{S})
    \;=\; \mathcal{R}(\mathcal{S}) - \lambda_{\text{trade}}\,\mathcal{E}(\mathcal{S})
    \;=\; \sum_{i\in\mathcal{S}}\big(1-c_i^{(s)}-\lambda_{\text{trade}}\big),
    \label{eq:obj-app}
\end{equation}
where $\lambda_{\text{trade}}>0$ controls the risk-speed trade-off and is \emph{unrelated} to the training loss-mixing schedule $\lambda(g)$.
Because \eqref{eq:obj-app} is additive across positions, the unconstrained minimizer commits all positions whose per-token contribution is negative:
\begin{equation}
    i\in\mathcal{S}^*
    \iff 1-c_i^{(s)}-\lambda_{\text{trade}}<0
    \iff c_i^{(s)} > 1-\lambda_{\text{trade}}.
\end{equation}
Defining the confidence threshold $\gamma_{\text{conf}}=1-\lambda_{\text{trade}}$ yields the basic \emph{confidence thresholding} rule.

\subsection{Trust-region and Liveness Constraints, and the CAD Rule}
In practice, we impose two cardinality constraints:
\emph{trust region} ($|\mathcal{S}|\le k_{\max}$) to prevent committing too many tokens based on local confidence estimates, and
\emph{liveness} ($|\mathcal{S}|\ge k_{\min}$) to guarantee progress whenever possible.
This yields the constrained problem
\begin{equation}
    \min_{\mathcal{S}\subseteq \mathcal{V}^{(s)}} \mathcal{L}(\mathcal{S})
    \quad \text{s.t.}\quad
    k_{\min} \le |\mathcal{S}| \le k_{\max}.
    \label{eq:constrained-obj-app}
\end{equation}
Sorting candidates by decreasing confidence produces an ordering $i_{(1)},\ldots,i_{(|\mathcal{V}^{(s)}|)}$ with
$c_{i_{(1)}}^{(s)}\ge\cdots\ge c_{i_{(|\mathcal{V}^{(s)}|)}}^{(s)}$.
Under \eqref{eq:constrained-obj-app}, the optimum is always a prefix of this ordering.
CAD adopts a simple greedy approximation: Count how many positions within the trust region exceed $\gamma_{\text{conf}}$,
\begin{equation}
    n_{\text{conf}}
    \;=\;
    \left|\left\{j \le \min(k_{\max},|\mathcal{V}^{(s)}|): c_{i_{(j)}}^{(s)}\ge \gamma_{\text{conf}}\right\}\right|,
    \label{eq:n-conf-app}
\end{equation}
then clamp the committed token count and select the top-$k^{(s)}$ positions:
\begin{equation}
    k^{(s)}
    \;=\;
    \min\Big(\max(k_{\min},\,n_{\text{conf}}),\; \min(k_{\max},\,|\mathcal{V}^{(s)}|)\Big),
    \qquad
    \mathcal{S}^{(s)}=\{i_{(1)},\ldots,i_{(k^{(s)})}\}.
    \label{eq:k-update-app}
\end{equation}
Equations~\eqref{eq:n-conf-app}-\eqref{eq:k-update-app} match the CAD update implemented in the main text.

\subsection{Monotonic Progress and Step Bound}
The following statements formalize why CAD terminates and yield a simple step bound under block diffusion.

\begin{lemma}[Monotonicity within the active block]
\label{lem:monotone-app}
Assume $\mathcal{E}^{(s)}\neq \emptyset$ and $k_{\min}\ge 1$. After one CAD update,
\[
|\mathcal{M}^{(s+1)}\cap \mathcal{B}_j| \;\le\; |\mathcal{M}^{(s)}\cap \mathcal{B}_j| - 1.
\]
Moreover, if $|\mathcal{E}^{(s)}|\ge k_{\min}$, then
\[
|\mathcal{M}^{(s+1)}\cap \mathcal{B}_j| \;\le\; |\mathcal{M}^{(s)}\cap \mathcal{B}_j| - k_{\min}.
\]
\end{lemma}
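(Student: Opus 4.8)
The plan is to reduce both inequalities to a single exact identity for the masked count inside the active block, and then to a lower bound on the committed count $k^{(s)}$. First I would note that CAD only touches positions inside the eligible set $\mathcal{E}^{(s)}=\mathcal{M}^{(s)}\cap\mathcal{B}_j$ from \eqref{eq:eligible-set-app}: by \eqref{eq:k-update-app} the committed set $\mathcal{S}^{(s)}$ is a subset of the valid candidates $\mathcal{V}^{(s)}\subseteq\mathcal{E}^{(s)}$, and committing a position replaces its mask symbol by a predicted token that is never re-masked within the active block (finished positions are frozen). Hence $\mathcal{M}^{(s+1)}\cap\mathcal{B}_j=\mathcal{E}^{(s)}\setminus\mathcal{S}^{(s)}$ with $\mathcal{S}^{(s)}\subseteq\mathcal{E}^{(s)}$, giving the exact identity $|\mathcal{M}^{(s+1)}\cap\mathcal{B}_j|=|\mathcal{M}^{(s)}\cap\mathcal{B}_j|-k^{(s)}$ where $k^{(s)}=|\mathcal{S}^{(s)}|$. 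Both claims then follow once I show $k^{(s)}\ge 1$ in general and $k^{(s)}\ge k_{\min}$ under the extra hypothesis $|\mathcal{E}^{(s)}|\ge k_{\min}$.

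For the lower bounds on $k^{(s)}$ I would unfold the clamp in \eqref{eq:k-update-app}, namely $k^{(s)}=\min\!\big(\max(k_{\min},n_{\text{conf}}),\,\min(k_{\max},|\mathcal{V}^{(s)}|)\big)$. The first argument is at least $k_{\min}\ge 1$ by hypothesis. For the second argument I use $\mathcal{E}^{(s)}\neq\emptyset$ together with the default $\mathcal{V}^{(s)}=\mathcal{E}^{(s)}$ to get $|\mathcal{V}^{(s)}|\ge 1$, and the standing feasibility requirement $1\le k_{\min}\le k_{\max}$ from \eqref{eq:constrained-obj-app} to get $\min(k_{\max},|\mathcal{V}^{(s)}|)\ge 1$; hence $k^{(s)}\ge 1$, and the first inequality follows by substitution into the identity above. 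For the second, $|\mathcal{E}^{(s)}|\ge k_{\min}$ gives $|\mathcal{V}^{(s)}|\ge k_{\min}$, so $\min(k_{\max},|\mathcal{V}^{(s)}|)\ge k_{\min}$, while $\max(k_{\min},n_{\text{conf}})\ge k_{\min}$; taking the minimum yields $k^{(s)}\ge k_{\min}$, and substitution gives $|\mathcal{M}^{(s+1)}\cap\mathcal{B}_j|\le|\mathcal{M}^{(s)}\cap\mathcal{B}_j|-k_{\min}$.

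The statement is elementary, so the only real obstacle is careful bookkeeping rather than any deep argument. I would make explicit (i) that committed positions in the active block are never re-masked, so the masked count within $\mathcal{B}_j$ is monotone non-increasing from step to step; (ii) the inclusion $\mathcal{V}^{(s)}\subseteq\mathcal{E}^{(s)}$, together with the default $\mathcal{V}^{(s)}=\mathcal{E}^{(s)}$ used in the main text, so that the nonemptiness and size hypotheses on $\mathcal{E}^{(s)}$ transfer to $\mathcal{V}^{(s)}$; and (iii) the implicit hyperparameter constraints $1\le k_{\min}\le k_{\max}$ needed so the clamp in \eqref{eq:k-update-app} never drops below the intended floor. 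If one prefers not to assume $\mathcal{V}^{(s)}=\mathcal{E}^{(s)}$, the cleanest fix is to state the hypothesis directly as $\mathcal{V}^{(s)}\neq\emptyset$ (respectively $|\mathcal{V}^{(s)}|\ge k_{\min}$); the rest of the argument is unchanged.
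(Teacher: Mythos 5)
Your proof is correct and follows essentially the same route as the paper's: identify the per-step decrease in the block's masked count with $k^{(s)}=|\mathcal{S}^{(s)}|$ and then lower-bound $k^{(s)}$ via the clamp in the CAD update, obtaining $k^{(s)}\ge 1$ when $\mathcal{E}^{(s)}\neq\emptyset$ and $k^{(s)}\ge k_{\min}$ when $|\mathcal{E}^{(s)}|\ge k_{\min}$. Your version is simply more explicit about unfolding the $\min/\max$, the default $\mathcal{V}^{(s)}=\mathcal{E}^{(s)}$, and the implicit requirement $1\le k_{\min}\le k_{\max}$, all of which the paper leaves tacit.
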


\begin{proof}
By construction, $\mathcal{S}^{(s)}\subseteq \mathcal{E}^{(s)}=\mathcal{M}^{(s)}\cap\mathcal{B}_j$ and each $i\in\mathcal{S}^{(s)}$ is committed from \texttt{<mask>} to a concrete token, hence removed from $\mathcal{M}^{(s)}$.
If $\mathcal{E}^{(s)}\neq\emptyset$, the clamp in \eqref{eq:k-update-app} ensures $k^{(s)}\ge 1$, yielding the first inequality.
When $|\mathcal{E}^{(s)}|\ge k_{\min}$, we have $k^{(s)}\ge k_{\min}$, yielding the second inequality.
\end{proof}

\begin{corollary}[Per-block and total NFE bound]
\label{cor:stepbound-app}
For block size $b$, each block $\mathcal{B}_j$ is resolved within at most $\left\lceil |\mathcal{B}_j|/k_{\min}\right\rceil$ NFE.
Therefore, the total NFE satisfies
\[
S \;\le\; \sum_{j=1}^{J}\left\lceil\frac{|\mathcal{B}_j|}{k_{\min}}\right\rceil
\;\le\;
\left\lceil\frac{L_{\text{gen}}}{k_{\min}}\right\rceil + J
\;\le\;
\left\lceil\frac{L_{\text{gen}}}{k_{\min}}\right\rceil + \left\lceil\frac{L_{\text{gen}}}{b}\right\rceil,
\]
and is additionally capped by the implementation limit $S_{\max}$.
\end{corollary}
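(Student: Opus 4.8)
The plan is to prove Corollary~\ref{cor:stepbound-app} as a bookkeeping argument layered on the monotonicity guarantee of Lemma~\ref{lem:monotone-app}: first I would bound the number of steps spent while a single block is active, then sum over the $J$ blocks, and finally collapse the resulting sum of ceilings into the stated closed form.

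\textbf{Per-block bound.} Fix a block $\mathcal{B}_j$ and let $m^{(s)}:=|\mathcal{M}^{(s)}\cap\mathcal{B}_j|$ be the number of positions of $\mathcal{B}_j$ still masked at step $s$. When $j$ first becomes the active block, none of its positions have been touched (future blocks stay fully masked until activated), so $m^{(s)}=|\mathcal{B}_j|$; conversely $j$ is incremented — and never revisited — exactly when $m^{(s)}$ hits $0$, i.e.\ $\mathcal{E}^{(s)}=\emptyset$. By Lemma~\ref{lem:monotone-app}, while $m^{(s)}\ge k_{\min}$ every CAD update reduces $m$ by at least $k_{\min}$, and once $0<m^{(s)}<k_{\min}$ the clamp in \eqref{eq:k-update-app} collapses to $k^{(s)}=|\mathcal{V}^{(s)}|\ge 1$, committing \emph{all} remaining eligible positions in one step. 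Hence the worst case is to commit exactly $k_{\min}$ tokens for $\lfloor |\mathcal{B}_j|/k_{\min}\rfloor$ steps and then, if $k_{\min}\nmid|\mathcal{B}_j|$, one final cleanup step, for a total of at most $\lceil |\mathcal{B}_j|/k_{\min}\rceil$ steps; committing more (up to $k_{\max}$) only finishes sooner, so the bound is preserved.

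\textbf{Summation and closed form.} Since blocks are decoded left-to-right and each is active for a contiguous, disjoint range of steps, $S\le\sum_{j=1}^{J}\lceil |\mathcal{B}_j|/k_{\min}\rceil$. Applying $\lceil x\rceil\le x+1$ to each summand and using $\sum_{j=1}^{J}|\mathcal{B}_j|=L_{\text{gen}}$ gives $\sum_{j}\lceil |\mathcal{B}_j|/k_{\min}\rceil\le L_{\text{gen}}/k_{\min}+J$; since the left side is an integer it is at most $\lfloor L_{\text{gen}}/k_{\min}+J\rfloor=\lfloor L_{\text{gen}}/k_{\min}\rfloor+J\le\lceil L_{\text{gen}}/k_{\min}\rceil+J$, which together with $J=\lceil L_{\text{gen}}/b\rceil$ (definition of the block partition) yields the displayed chain. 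The $S_{\max}$ cap is immediate: the decoding loop halts unconditionally after $S_{\max}$ function evaluations, so $S\le\min(\lceil L_{\text{gen}}/k_{\min}\rceil+J,\,S_{\max})$, and truncating at $S_{\max}$ can only decrease $S$, so the inequalities above still hold.

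\textbf{Main obstacle.} The argument is almost entirely accounting; the one place that genuinely needs care is the boundary regime $0<m^{(s)}<k_{\min}$, where one must check that $k^{(s)}=\min(\max(k_{\min},n_{\text{conf}}),\min(k_{\max},|\mathcal{V}^{(s)}|))$ cannot stall — when $|\mathcal{V}^{(s)}|<k_{\min}$ it forces $k^{(s)}=|\mathcal{V}^{(s)}|\ge 1$ provided $\mathcal{E}^{(s)}\neq\emptyset$ and (by default) $\mathcal{V}^{(s)}=\mathcal{E}^{(s)}$. This is precisely what turns ``$\lfloor\cdot\rfloor$ plus a possible extra step'' into the clean ceiling $\lceil |\mathcal{B}_j|/k_{\min}\rceil$. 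A related subtlety worth stating explicitly is that within an active block no masked position is ever resolved except through a CAD commit (EOS blocking only suppresses premature EOS tokens and commits nothing), so the progress accounting $m^{(s)}\to 0$ is exhaustive and the per-block count is a faithful upper bound rather than a loose over-estimate.
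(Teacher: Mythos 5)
Your proof is correct and follows exactly the route the paper intends: the corollary is left unproved in the paper as an immediate consequence of Lemma~\ref{lem:monotone-app}, and your write-up (per-block ceiling bound from the monotonicity lemma, summation over disjoint active ranges, then the elementary ceiling manipulations with $J=\lceil L_{\text{gen}}/b\rceil$) is the natural fleshing-out of that argument. Your extra care at the boundary regime $0<|\mathcal{E}^{(s)}|<k_{\min}$, where the clamp forces $k^{(s)}=|\mathcal{V}^{(s)}|\ge 1$, is exactly the detail needed to turn the floor-plus-one count into a clean ceiling, and it is consistent with the paper's clamp rule.
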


\subsection{Special Cases and Relation to Fixed-step Decoding}
\begin{proposition}[Pure diffusion as a special case]
Setting $b=L_{\text{gen}}$ (i.e., $J=1$) reduces block diffusion with CAD to pure diffusion with CAD, because $\mathcal{E}^{(s)}=\mathcal{M}^{(s)}$ for all $s$.
\end{proposition}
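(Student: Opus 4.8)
The plan is to verify the claim by directly unfolding the block-partition definitions under the specialization $b = L_{\text{gen}}$; there is no deep content here, only a bookkeeping check. First I would compute the number of blocks: with $b = L_{\text{gen}}$ we have $J = \lceil L_{\text{gen}}/b \rceil = \lceil 1 \rceil = 1$, so the partition $\{\mathcal{B}_j\}_{j=1}^{J}$ collapses to the single block $\mathcal{B}_1 = \{1,\ldots,\min(b,L_{\text{gen}})\} = \{1,\ldots,L_{\text{gen}}\} = [L_{\text{gen}}]$. In particular, the active block index is permanently $j = 1$, and there is no block $\mathcal{B}_2$ for the advancement rule $j \leftarrow j+1$ to move to.

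Second I would substitute this into the eligible-set definition \eqref{eq:eligible-set-app}: at every decoding step $s$ the active block is $\mathcal{B}_1$, so $\mathcal{E}^{(s)} = \mathcal{M}^{(s)} \cap \mathcal{B}_1 = \mathcal{M}^{(s)} \cap [L_{\text{gen}}] = \mathcal{M}^{(s)}$, where the last equality uses $\mathcal{M}^{(s)} \subseteq [L_{\text{gen}}]$, which holds by definition of the masked index set. This is exactly the stated identity, and $b = L_{\text{gen}}$ is used nowhere else in the argument.

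Third I would argue that this identity forces the entire CAD loop to coincide with the pure-diffusion iteration of Fig.~\ref{fig:decoding-overview}(d). Every state-dependent quantity in the CAD update factors through $\mathcal{E}^{(s)}$: the (default) valid set $\mathcal{V}^{(s)} = \mathcal{E}^{(s)}$, the confidence scores $\{c_i^{(s)}\}_{i\in\mathcal{E}^{(s)}}$, the count $n_{\text{conf}}$ in \eqref{eq:n-conf-app}, and the clamped commit size $k^{(s)}$ and committed set $\mathcal{S}^{(s)}$ in \eqref{eq:k-update-app}. Replacing $\mathcal{E}^{(s)}$ by $\mathcal{M}^{(s)}$ throughout therefore turns the block-diffusion CAD step into the CAD step that denoises the whole length-$L_{\text{gen}}$ sequence; moreover the block-advance branch is never triggered (since $J = 1$), so the loop simply iterates until $\mathcal{M}^{(s)} = \emptyset$ or the budget $S_{\max}$ is hit, which is precisely the pure-diffusion decoding procedure. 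I do not expect any genuine obstacle: the proposition is a definitional specialization rather than a theorem with content, so the only care needed is to confirm that the ceiling and the $\min(jb,L_{\text{gen}})$ truncation really do collapse the partition to a single block and that no CAD quantity depends on the block structure beyond $\mathcal{E}^{(s)}$ itself.
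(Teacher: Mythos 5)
Your proposal is correct and matches the paper's (implicit) justification exactly: the paper offers no separate proof beyond the ``because'' clause in the statement, which is precisely the definitional unfolding you carry out ($J=1$, $\mathcal{B}_1=[L_{\text{gen}}]$, hence $\mathcal{E}^{(s)}=\mathcal{M}^{(s)}\cap\mathcal{B}_1=\mathcal{M}^{(s)}$). Your additional check that every CAD quantity factors through $\mathcal{E}^{(s)}$ is a reasonable bit of extra diligence but does not change the argument.
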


\begin{proposition}[Reduction to fixed-step block diffusion]
If $\gamma_{\text{conf}}=-\infty$ and $k_{\min}=k_{\max}=k$ (a constant), then CAD commits exactly $k$ positions per NFE within the active block (until fewer than $k$ masks remain), matching a fixed-step block diffusion schedule up to the last step of each block.
\end{proposition}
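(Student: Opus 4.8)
The plan is to prove this by direct substitution of the two hypotheses into the CAD update rule of Eqs.~\eqref{eq:n-conf-app}--\eqref{eq:k-update-app} and then comparing the resulting per-step commit count against the fixed-step schedule. The argument is essentially a definitional unfolding, so I do not expect a genuine obstacle; the only care needed is in handling the clamp interactions and in stating precisely what ``matching'' means.

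First I would observe that $\gamma_{\text{conf}}=-\infty$ makes the confidence test $c_{i_{(j)}}^{(s)}\ge\gamma_{\text{conf}}$ vacuously true for every eligible position, irrespective of the student's actual scores. Hence in Eq.~\eqref{eq:n-conf-app} the count collapses to the size of the index range itself, $n_{\text{conf}}=\min(k_{\max},|\mathcal{V}^{(s)}|)$, which with the default $\mathcal{V}^{(s)}=\mathcal{E}^{(s)}=\mathcal{M}^{(s)}\cap\mathcal{B}_j$ and $k_{\max}=k$ becomes $n_{\text{conf}}=\min(k,|\mathcal{E}^{(s)}|)$. Substituting this into Eq.~\eqref{eq:k-update-app} with $k_{\min}=k_{\max}=k$, a short case split on whether $|\mathcal{E}^{(s)}|\ge k$ shows that $\max(k_{\min},n_{\text{conf}})=k$ in both cases, and the outer clamp against $\min(k_{\max},|\mathcal{V}^{(s)}|)$ then gives $k^{(s)}=\min(k,|\mathcal{E}^{(s)}|)$. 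Consequently, whenever at least $k$ masked positions remain in the active block, exactly $k$ of them are committed by the $\mathrm{TopK}$ step; only on the final step of a block, when fewer than $k$ masks remain, does the commit size fall to $|\mathcal{E}^{(s)}|<k$.

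Finally I would note that $k^{(s)}=\min(k,|\mathcal{E}^{(s)}|)$ is exactly the per-step budget of a fixed-step block-diffusion decoder: it unmasks the $k$ most confident masked positions within the active block at each step and, at the last step of a block, unmasks whatever remains. The two schedules therefore agree on the number of tokens committed at every NFE except possibly the last step of each block, which is the stated conclusion. It is worth flagging explicitly --- though this is a clarification rather than a difficulty --- that the \emph{identity} of the committed positions can still differ, since when $\gamma_{\text{conf}}=-\infty$ the confidence ordering is irrelevant to the count $k^{(s)}$ even though it continues to govern which $k$ positions $\mathrm{TopK}$ selects; the proposition asserts only the agreement of commit counts per NFE, which the substitution establishes directly.
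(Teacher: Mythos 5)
Your proposal is correct: substituting $\gamma_{\text{conf}}=-\infty$ into Eq.~\eqref{eq:n-conf-app} gives $n_{\text{conf}}=\min(k_{\max},|\mathcal{V}^{(s)}|)$, and the clamp in Eq.~\eqref{eq:k-update-app} with $k_{\min}=k_{\max}=k$ then yields $k^{(s)}=\min(k,|\mathcal{E}^{(s)}|)$, exactly as you compute. The paper states this proposition without an explicit proof, treating it as an immediate consequence of the update equations, and your definitional unfolding is precisely the intended argument; your closing remark about TopK is in fact slightly over-cautious, since the fixed-step baseline also commits the $k$ most confident positions, so the selected positions agree as well.
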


\subsection{Complexity and Overhead}
A natural concern for CAD is the computational cost of the control policy itself (e.g., selecting high-confidence positions and updating the active set), especially at large batch sizes.
We denote by $B$ the batch size, $L$ the total sequence length (prompt + target), $L_{\text{gen}}$ the maximum generation budget, $D$ the hidden dimension, and $N_{\text{layer}}$ the number of Transformer layers.
Ignoring constant factors, a single forward pass of the student model has time complexity
\begin{equation}
    \text{Cost}_{\text{model}}
    = \mathcal{O}\!\big(B\,N_{\text{layer}}(L^2D + LD^2)\big),
\end{equation}
dominated by self-attention and feed-forward blocks over the full context.

In contrast, the CAD controller operates only on token-wise confidence scores within the eligible set. At decoding step $s$, it performs thresholding and/or a top-$k$ operation over at most $|\mathcal{E}^{(s)}|$ elements, costing
\begin{equation}
    \text{Cost}_{\text{CAD}}^{(s)}
    = \mathcal{O}\!\big(B\,|\mathcal{E}^{(s)}|\log|\mathcal{E}^{(s)}|\big).
\end{equation}
Under block diffusion with block size $b$, we have $|\mathcal{E}^{(s)}|\le b$ for all $s$, hence
\begin{equation}
    \text{Cost}_{\text{CAD}}
    = \sum_{s=1}^{S}\text{Cost}_{\text{CAD}}^{(s)}
    = \mathcal{O}\!\big(BS\,b\log b\big)
    \;\le\; \mathcal{O}\!\big(BS\,L_{\text{gen}}\log L_{\text{gen}}\big),
\end{equation}
where the last inequality follows from $b \le L_{\text{gen}}$ (pure diffusion corresponds to $b = L_{\text{gen}}$).
Comparing the two, the relative overhead satisfies:
\begin{equation}
    \frac{\text{Cost}_{\text{CAD}}}{\text{Cost}_{\text{model}}}
    \;\lesssim\;
    \frac{B S \, b \log b}{B \, N_{\text{layer}} S (L^2 D + L D^2)}
    \;\approx\; \mathcal{O}\!\left(\frac{b \log b}{N_{\text{layer}} L^2 D}\right).
\end{equation}
Since $L$ represents the full sequence length, we observe two asymptotic regimes:
(1) For block diffusion ($b$ fixed, $b \ll L$), the overhead decays quadratically with sequence length ($\propto L^{-2}$), rendering it negligible for long-context generation.
(2) Even for pure diffusion where $b \approx L_{\text{gen}} < L$, the ratio scales as $\mathcal{O}(L^{-1})$, which still vanishes asymptotically as model depth and width increase.
\section{Method Details}

\subsection{Distillation Algorithm Pseudocode}
\label{sec:appendix-distillation}

Algorithm~\ref{alg:dscd} describes the full training procedure for DSCD. The algorithm alternates between (i) constructing paired student-teacher views via teacher-subset masking, (ii) computing the curriculum-weighted loss, and (iii) updating the student parameters.

\begin{algorithm}[h]
\caption{Discrete-Space Consistency Distillation (DSCD)}
\label{alg:dscd}
\begin{algorithmic}[1]
\Require Teacher model $p_\phi$ (frozen), student model $p_\theta$, dataset $\mathcal{D}$
\Require Mask ratio ranges: $[r_{\text{S}}^{\min}, r_{\text{S}}^{\max}]$, $[r_{\text{T}}^{\min}, r_{\text{T}}^{\max}]$
\Require Distillation temperature $\tau$, curriculum parameters $\lambda_0, \lambda_1, \alpha$
\Require Maximum training steps $G_{\max}$
\For{$g = 1$ \textbf{to} $G_{\max}$}
    \State Sample mini-batch $\{(x^{(b)}, y^{(b)})\}_{b=1}^{B} \sim \mathcal{D}$
    \For{each sample $(x, y)$ in batch}
        \State $z \gets (x; y)$ \Comment{Concatenate prompt and target}
        \State $L_x \gets |x|$, $L_y \gets |y|$, $L \gets L_x + L_y$
        \Statex \textcolor{gray}{\textit{// --- Step 1: Sample mask ratios (see Eq.~\ref{eq:mask-ratio-sampling}) ---}}
        \State $r_{\text{S}} \sim \mathrm{Uniform}(r_{\text{S}}^{\min}, r_{\text{S}}^{\max})$
        \State $u \sim \mathrm{Uniform}(0.3, 0.7)$
        \State $r_{\text{T}} \gets \mathrm{clip}(r_{\text{S}} \cdot u, \; r_{\text{T}}^{\min}, \; r_{\text{T}}^{\max})$
        
        \Statex \textcolor{gray}{\textit{// --- Step 2: Construct student mask ---}}
        \State $\mathcal{U}_{\text{valid}} \gets \{L_x+1, \ldots, L\} \setminus \mathcal{P}$
        \State $\mathcal{M}_{\text{S}} \gets \text{RandomSample}(\mathcal{U}_{\text{valid}}, \; n_{\text{S}})$ \Comment{Uniformly sample positions}
        \State $\tilde{z}^{\text{S}} \gets \text{Mask}(z, \mathcal{M}_{\text{S}})$ \Comment{Replace $z_i$ with $\mathtt{m}$ for $i \in \mathcal{M}_{\text{S}}$}
        
        \Statex \textcolor{gray}{\textit{// --- Step 3: Teacher-subset masking ---}}
        \State $n_{\text{T}} \gets \min(\lfloor L_y \cdot r_{\text{T}} \rfloor, \; n_{\text{S}})$
        \State $\mathcal{M}_{\text{T}} \gets \text{RandomSubset}(\mathcal{M}_{\text{S}}, \; n_{\text{T}})$ \Comment{Strict subset: $\mathcal{M}_{\text{T}} \subseteq \mathcal{M}_{\text{S}}$}
        \State $\tilde{z}^{\text{T}} \gets \text{Mask}(z, \mathcal{M}_{\text{T}})$
        
        \Statex \textcolor{gray}{\textit{// --- Step 4: Forward pass ---}}
        \State $\ell_\theta \gets p_\theta(\cdot \mid \tilde{z}^{\text{S}})$ \Comment{Student logits}
        \State $\ell_\phi \gets p_\phi(\cdot \mid \tilde{z}^{\text{T}})$ \Comment{Teacher logits (no grad)}
        \Statex \textcolor{gray}{\textit{// --- Step 5: Compute losses on $\mathcal{M}_{\text{S}}$ ---}}
        
        \State $\mathcal{L}_{\text{recon}} \gets -\frac{1}{|\mathcal{M}_{\text{S}}|} \sum_{i \in \mathcal{M}_{\text{S}}} \log p_\theta(z_i \mid \tilde{z}^{\text{S}})$
        \State $\tilde{p}_\phi \gets \mathrm{softmax}(\ell_\phi / \tau)$ \Comment{Temperature-scaled teacher}
        \State $\mathcal{L}_{\text{cons}} \gets \frac{\tau^2}{|\mathcal{M}_{\text{S}}|} \sum_{i \in \mathcal{M}_{\text{S}}} \mathrm{KL}(\tilde{p}_{\phi,i} \| p_{\theta,i})$
    \EndFor
    \Statex \textcolor{gray}{\textit{// --- Step 6: Curriculum-weighted total loss ---}}
    \State $\lambda(g) \gets \text{CosineSchedule}(g/G_{\max}; \lambda_0, \lambda_1, \alpha)$ \Comment{Eq.~\eqref{eq:lambda-schedule}}
    \State $\mathcal{L}_{\text{total}} \gets \lambda(g) \cdot \mathcal{L}_{\text{cons}} + (1 - \lambda(g)) \cdot \mathcal{L}_{\text{recon}}$
    \Statex \textcolor{gray}{\textit{// --- Step 7: Parameter update ---}}
    \State $\theta \gets \theta - \eta \cdot \nabla_\theta \mathcal{L}_{\text{total}}\big|_{\text{batch}}$
\EndFor
\State \Return Trained student $p_\theta$
\end{algorithmic}
\end{algorithm}
\vspace*{2mm}
\paragraph{Key design choices.}
\begin{enumerate}[leftmargin=*,itemsep=2pt]
    \item \textbf{Teacher-subset masking} (lines 14-16): By enforcing $\mathcal{M}_{\text{T}} \subseteq \mathcal{M}_{\text{S}}$, the teacher always observes a superset of tokens visible to the student. eqThis Rao-Blackwellizes the distillation target (Appendix~\ref{app:variance}).
    \item \textbf{Multiplicative ratio coupling} (lines 7-9): The teacher ratio is derived as $r_{\text{T}} = r_{\text{S}} \cdot u$ with $u \sim \mathcal{U}(0.3, 0.7)$, ensuring $r_{\text{T}} < r_{\text{S}}$ with high probability while maintaining stochastic diversity.
    \item \textbf{Minimum mask count} (line 11): Short answers ($L_y < 20$) use reduced mask ratios to prevent degenerate training signals.
\end{enumerate}

\paragraph{Mask ratio sampling.}
The mask ratios are sampled according to:
\begin{equation}
\begin{aligned}
r_{\text{S}} &\sim \mathcal{U}(r_{\text{S}}^{\min}, r_{\text{S}}^{\max}) = \mathcal{U}(0.4, 0.9), \\
r_{\text{T}} &= \mathrm{clip}\big(r_{\text{S}} \cdot u, \; r_{\text{T}}^{\min}, \; r_{\text{T}}^{\max}\big), \quad u \sim \mathcal{U}(0.3, 0.7),
\end{aligned}
\label{eq:mask-ratio-sampling}
\end{equation}
where $[r_{\text{T}}^{\min}, r_{\text{T}}^{\max}] = [0.1, 0.6]$. The multiplicative coupling ensures that the teacher always sees more context than the student (i.e., $|\mathcal{M}_{\text{T}}| \le |\mathcal{M}_{\text{S}}|$).

\paragraph{Curriculum schedule function.}
The mixing coefficient $\lambda(g)$ follows a cosine schedule with warmup:
\begin{equation}
\lambda(g) =
\begin{cases}
\lambda_0, & g \le \alpha, \\[3pt]
\lambda_0 + (\lambda_1 - \lambda_0) \cdot \dfrac{1 - \cos\!\big(\pi \frac{g - \alpha}{1 - \alpha}\big)}{2}, & g > \alpha,
\end{cases}
\label{eq:lambda-schedule}
\end{equation}
where $g \in [0,1]$ is the normalized training progress, $\alpha$ is the warmup fraction, and $\lambda_0 > \lambda_1$ ensures the transition from distillation-dominated to reconstruction-dominated training.

\subsection{Adaptive Decoding Algorithm Pseudocode}
\label{sec:appendix-decoding}

Algorithm~\ref{alg:cad} presents the CAD procedure. The algorithm operates within the block diffusion framework and adaptively selects how many tokens to commit at each step based on model confidence.

\begin{algorithm}[h]
\caption{Confidence-Adaptive  Decoding (CAD)}
\label{alg:cad}
\begin{algorithmic}[1]
\Require Student model $p_\theta$, prompt $x$
\Require Maximum generation budget $L_{\text{gen}}$
\Require Block size $b$, confidence threshold $\gamma_{\text{conf}}$
\Require Acceptance bounds $k_{\min}, k_{\max}$, EOS blocking ratio $\beta_{\text{EOS}}$, max steps $S_{\max}$
\State $\tilde{z}^{(0)} \gets (x; \underbrace{\mathtt{m}, \ldots, \mathtt{m}}_{L_{\text{gen}}})$ \Comment{Initialize with all masks}
\State $\mathcal{M}^{(0)} \gets \{1, \ldots, L_{\text{gen}}\}$ \Comment{All target positions masked}
\State $J \gets \lceil L_{\text{gen}} / b \rceil$, $j \gets 1$ \Comment{Number of blocks, active block index}
\State $s \gets 0$, $n_{\text{decoded}} \gets 0$ \Comment{Step counter, decoded token count}
\While{$\mathcal{M}^{(s)} \neq \emptyset$ \textbf{and} $s < S_{\max}$}
    \Statex \textcolor{gray}{\textit{// --- Step 1: Compute eligible set within active block ---}}
    \State $\mathcal{B}_j \gets \{(j-1)b + 1, \ldots, \min(jb, L_{\text{gen}})\}$
    \State $\mathcal{E}^{(s)} \gets \mathcal{M}^{(s)} \cap \mathcal{B}_j$ \Comment{Eligible = masked $\cap$ active block}
    \If{$\mathcal{E}^{(s)} = \emptyset$}
        \State $j \gets j + 1$ \Comment{Advance to next block}
        \State \textbf{continue}
    \EndIf
    \Statex \textcolor{gray}{\textit{// --- Step 2: Model forward and confidence computation ---}}
    \State $\ell \gets p_\theta(\cdot \mid \tilde{z}^{(s)})$ \Comment{Logits for all positions}
    \State $p \gets \mathrm{softmax}(\ell)$ \Comment{Token probabilities}
    \State $\hat{z}_i \gets \arg\max_v \ell_{i,v}$ for $i \in \mathcal{E}^{(s)}$ \Comment{Predicted tokens}
    \State $c_i^{(s)} \gets p_{i, \hat{z}_i}$ for $i \in \mathcal{E}^{(s)}$ \Comment{Confidence = prob of argmax token}
    \Statex \textcolor{gray}{\textit{// --- Step 3: EOS blocking ---}}
    \If{$n_{\text{decoded}} / L_{\text{gen}} < \beta_{\text{EOS}}$}
        \For{$i \in \mathcal{E}^{(s)}$ where $\hat{z}_i \in \{\texttt{<EOS>}, \texttt{<|im\_end|>}, \ldots\}$}
            \State $c_i^{(s)} \gets -\infty$ \Comment{Suppress premature EOS}
        \EndFor
    \EndIf
    \Statex \textcolor{gray}{\textit{// --- Step 4: Confidence-adaptive selection ---}}
    \State $\mathcal{V}^{(s)} \gets \{i \in \mathcal{E}^{(s)} : c_i^{(s)} > -\infty\}$ \Comment{Valid (non-blocked) positions}
    \If{$\mathcal{V}^{(s)} = \emptyset$}
        \State $k^{(s)} \gets \min(k_{\min}, |\mathcal{E}^{(s)}|)$ \Comment{Force progress when all blocked}
        \State $\mathcal{S}^{(s)} \gets \mathrm{TopK}(\{c_i^{(s)}\}_{i \in \mathcal{E}^{(s)}}, k^{(s)})$
    \Else
        \State $n_{\text{conf}} \gets |\{i \in \mathcal{V}^{(s)} : c_i^{(s)} \ge \gamma_{\text{conf}}\}|$ \Comment{Count above threshold}
        \State $k^{(s)} \gets \mathrm{clip}\big(\max(n_{\text{conf}}, k_{\min}), \; k_{\min}, \; \min(k_{\max}, |\mathcal{V}^{(s)}|)\big)$
        \State $\mathcal{S}^{(s)} \gets \mathrm{TopK}(\{c_i^{(s)}\}_{i \in \mathcal{V}^{(s)}}, k^{(s)})$ \Comment{Select top-$k$ positions}
    \EndIf
    \Statex \textcolor{gray}{\textit{// --- Step 5: Commit predictions ---}}
    \For{$i \in \mathcal{S}^{(s)}$}
        \State $\tilde{z}^{(s+1)}_i \gets \hat{z}_i$
    \EndFor
    \State $\mathcal{M}^{(s+1)} \gets \mathcal{M}^{(s)} \setminus \mathcal{S}^{(s)}$
    \State $n_{\text{decoded}} \gets n_{\text{decoded}} + |\mathcal{S}^{(s)}|$
    \State $s \gets s + 1$
\EndWhile
\State \Return $\tilde{z}^{(s)}$ \Comment{Fully decoded sequence}
\end{algorithmic}
\end{algorithm}

\paragraph{Key algorithmic properties.}
\begin{enumerate}[leftmargin=*,itemsep=2pt]
    \item \textbf{Block-wise decoding} (lines 7-12): Positions are partitioned into $J = \lceil L_{\text{gen}} / b \rceil$ contiguous blocks. CAD processes one block at a time, advancing to the next block only when the current one is fully resolved. Pure diffusion is recovered by setting $b = L_{\text{gen}}$.

    \item \textbf{Confidence computation} (line 17): The confidence score $c_i^{(s)} = p_{i, \hat{z}_i}$ is the probability mass assigned to the predicted token $\hat{z}_i = \arg\max_v p_{i,v}$. This equals the maximum probability $\max_v p_{i,v}$ by construction.

    \item \textbf{Confidence thresholding} (line 25): Positions with confidence $c_i^{(s)} \ge \gamma_{\text{conf}}$ are candidates for commitment. This implements the optimal decision rule derived from the risk-efficiency trade-off (Appendix~\ref{app:cad_theory}).

    \item \textbf{Cardinality clamping} (line 26): The commit count $k^{(s)}$ satisfies:
    \begin{equation}
    k^{(s)} = \mathrm{clip}\big(\max(n_{\text{conf}}, k_{\min}), \; k_{\min}, \; \min(k_{\max}, |\mathcal{V}^{(s)}|)\big).
    \end{equation}
    The lower bound $k_{\min}$ guarantees progress (liveness); the upper bound $k_{\max}$ prevents overcommitment from unreliable confidence estimates (trust region).

    \item \textbf{EOS blocking} (lines 18-21): Premature \texttt{<EOS>} predictions are suppressed until at least $\beta_{\text{EOS}} \cdot L_{\text{gen}}$ tokens have been decoded, preventing degenerate short outputs. When all positions predict EOS (lines 22-24), the algorithm forces progress by selecting $k_{\min}$ tokens regardless of blocking.
\end{enumerate}

\clearpage

\paragraph{Complexity analysis.}
Let $S$ be the total number of decoding steps. The CAD controller performs at most $\mathcal{O}(b \log b)$ operations per step (sorting within the eligible set), yielding total overhead $\mathcal{O}(S \cdot b \log b)$. This is negligible compared to the model forward cost $\mathcal{O}(S \cdot N_{\text{layer}}(L^2 D + L D^2))$, as detailed in Appendix~\ref{app:cad_theory}.

\paragraph{Step bound.}
By Corollary~\ref{cor:stepbound-app}, the total NFE is bounded by:
\begin{equation}
S \le \left\lceil \frac{L_{\text{gen}}}{k_{\min}} \right\rceil + \left\lceil \frac{L_{\text{gen}}}{b} \right\rceil,
\end{equation}
plus the implementation cap $S_{\max}$. With default parameters $k_{\min} = 1$ and $b = L_{\text{gen}}$, this reduces to $S \le L_{\text{gen}} + 1$ in the worst case; in practice, adaptive acceptance yields $S \ll L_{\text{gen}}$.

\subsection{Implementation Details and Hyperparameters}
\label{sec:appendix-implementation}

\paragraph{Training hyperparameters.}
Table~\ref{tab:hparam-train-full} summarizes all training hyperparameters with their symbols matching the notation in Sect.~\ref{sec:method:dscd}.

\begin{table}[h]
\centering
\small
\setlength{\tabcolsep}{6pt}
\begin{tabular}{lccc}
\toprule
\textbf{Hyperparameter} & \textbf{Symbol} & \textsc{Code} & \textsc{Math} \\
\midrule

\multicolumn{4}{l}{\textbf{\textit{Optimization:}}} \\
\quad Optimizer            & --                   & AdamW        & AdamW \\
\quad Learning rate        & $\eta$               & $5\times10^{-6}$ & $5\times10^{-6}$ \\
\quad Weight decay         & --                   & 0.01         & 0.01 \\
\quad Betas                & $(\beta_1,\beta_2)$  & (0.9, 0.999) & (0.9, 0.999) \\
\quad Gradient clipping    & --                   & 1.0          & 1.0 \\
\quad Warmup steps         & --                   & 300          & 200 \\
\quad Effective batch size & $B$                  & 64           & 64 \\
\quad Training epochs      & --                   & 3            & 10 \\
\midrule

\multicolumn{4}{l}{\textbf{\textit{Distillation:}}} \\
\quad Temperature          & $\tau$               & 1.5          & 2.0 \\
\quad Initial $\lambda$    & $\lambda_0$          & 0.9          & 0.9 \\
\quad Final $\lambda$      & $\lambda_1$          & 0.5          & 0.5 \\
\quad Warmup ratio         & $\alpha$             & 0.1          & 0.1 \\
\quad Student mask ratio   & $r_{\mathrm S}\in[\cdot]$ & $[0.4,0.9]$ & $[0.4,0.9]$ \\
\quad Teacher mask ratio   & $r_{\mathrm T}\in[\cdot]$ & $[0.1,0.6]$ & $[0.1,0.6]$ \\
\quad Ratio coupling factor& $u$                  & $\mathcal{U}(0.3,0.7)$ & $\mathcal{U}(0.3,0.7)$ \\
\midrule

\multicolumn{4}{l}{\textbf{\textit{Data:}}} \\
\quad Dataset              & --                   & OpenCodeInstruct & GSM8K \\
\quad Training samples     & --                   & 190K & 7{,}473 \\
\bottomrule
\end{tabular}
\vspace{1em}
\caption{\textbf{Training hyperparameters and notation.}}
\label{tab:hparam-train-full}
\end{table}

\paragraph{Inference hyperparameters.}
Table~\ref{tab:hparam-infer-full} summarizes all inference hyperparameters with their symbols matching the notation in Sect.~\ref{sec:method:cad}.

\begin{table}[t]
\centering
\small
\setlength{\tabcolsep}{6pt}
\begin{tabular}{lcc}
\toprule
\textbf{Hyperparameter} & \textbf{Symbol} & \textbf{Value} \\
\midrule

\multicolumn{3}{l}{\textbf{\textit{CAD Decoding:}}} \\
\quad Confidence threshold   & $\gamma_{\text{conf}}$ & 0.95 \\
\quad Min tokens per step    & $k_{\min}$             & 1 \\
\quad Max tokens per step    & $k_{\max}$             & 32 \\
\quad EOS blocking ratio     & $\beta_{\text{EOS}}$   & 0.3 \\
\quad Max generation budget  & $L_{\text{gen}}$             & 256 \\
\quad Max NFE               & $S_{\max}$             & 512 \\
\midrule

\multicolumn{3}{l}{\textbf{\textit{Block Diffusion:}}} \\
\quad Block size             & $b$                    & 32/256 \\
\midrule

\multicolumn{3}{l}{\textbf{\textit{Sampling:}}} \\
\quad Temperature (pass@1)   & $\tau_{\text{samp}}$   & 0 \\
\quad Temperature (pass@5)   & $\tau_{\text{samp}}$   & 1.0 \\
\midrule

\multicolumn{3}{l}{\textbf{\textit{Fixed-Step Baseline:}}} \\
\quad Diffusion steps        & $T$                    & 256 \\
\bottomrule
\end{tabular}
\vspace{1em}
\caption{\textbf{Inference hyperparameters for CAD decoding.}}
\label{tab:hparam-infer-full}
\end{table}

\paragraph{Notation correspondence.}
For clarity, we summarize the correspondence between main-text notation and appendix notation:
\begin{itemize}[leftmargin=*,itemsep=2pt]
    \item $\tilde{z}^{\text{S}}, \tilde{z}^{\text{T}}$ (Sect.~\ref{sec:method:dscd}) $\leftrightarrow$ $S, T$ (Appendix~\ref{app:dscd_theory})
    \item $\mathcal{M}_{\text{S}}, \mathcal{M}_{\text{T}}$ (masked position sets) $\leftrightarrow$ $m_{\text{S}}, m_{\text{T}}$ (mask patterns)
    \item $\mathcal{M}^{(s)}$ (Sect.~\ref{sec:method:cad}) = masked set at decoding step $s$
    \item $\mathcal{E}^{(s)}$ = eligible set = $\mathcal{M}^{(s)} \cap \mathcal{B}_j$ (active block intersection)
    \item $\mathcal{V}^{(s)}$ = valid set = non-blocked positions in $\mathcal{E}^{(s)}$
    \item $\mathcal{S}^{(s)}$ = commit set (positions to unmask at step $s$)
    \item $c_i^{(s)}$ = confidence score = $p_{i, \hat{z}_i} = \max_v p_\theta(v \mid \tilde{z}^{(s)})_i$
\end{itemize}
\section{Ablation Study}
\label{sec:appendix-ablation}

To disentangle the contributions of the proposed training and decoding components, we conduct a comprehensive ablation study. Although our main distillation experiments focus on code generation, we select \textbf{GSM8K} as the primary testbed for this analysis. GSM8K problems induce longer chain-of-thought sequences, making decoding artifacts easier to diagnose, and reasoning accuracy is highly sensitive to partial errors.
To ensure a strictly fair comparison regarding model capacity, we introduce an SFT baseline trained on the same GSM8K dataset using full fine-tuning, identical to our DSCD training configuration. This allows us to isolate the impact of the distillation objective from the benefits of standard supervised learning.

\subsection{Effectiveness and Efficiency Analysis}
Table~\ref{tab:ablation} presents the quantitative results. We draw two key conclusions regarding the superiority of DSCD over standard SFT:

\textbf{1. DSCD is essential for Global Denoising (Validity).} 
In the local regime ($b{=}32$), both SFT and DSCD perform comparably to the teacher, as the denoising task is relatively simple. However, the distinction becomes evident in the global regime ($b{=}256$). The SFT baseline fails to adapt to the diffusion generation process, improving accuracy only marginally from 13.8\% (Teacher) to 21.9\%. In contrast, DSCD significantly boosts accuracy to 54.8\%. This implies that standard supervised learning captures the data distribution but fails to learn the \textit{many-to-many} dependency structure required for long-range parallel decoding.

\textbf{2. DSCD unlocks Aggressive Acceleration (Efficiency).}
Comparing the accelerated settings reveals that our training objective is crucial for the stability of CAD. While \textbf{SFT + CAD} provides a $3.13\times$ speedup, its accuracy collapses to 38.2\% at $b{=}256$, indicating that the SFT model lacks calibrated confidence scores to guide the drafting process. Conversely, \textbf{DSCD + CAD} maintains high robustness (54.7\%) while achieving a significantly higher speedup of \textbf{4.91$\times$}. This demonstrates that DSCD not only improves generation quality but also aligns the model's internal confidence with the decoding policy, enabling faster convergence without error propagation.

\subsection{Analysis of EOS Blocking: Structural Regularization vs.\ Intrinsic Learning}
A recurring failure mode of \emph{pure diffusion} decoding with long generation lengths is \emph{premature collapse to \texttt{<EOS>}}, producing extremely short, repetitive outputs.
In our setting, this pathology is severe for fixed decoding at full-sequence length ($b{=}256$): Without intervention, 99.8\% of samples terminate within the first 64 tokens.

\paragraph{The implicit containment effect.} 
Interestingly, for small blocks ($b{=}32$), disabling EOS blocking has negligible impact (77.4\% vs.\ 77.6\%).
We attribute this to an \emph{implicit containment effect}: Since decoding is localized to 32 positions, an early \texttt{<EOS>} prediction cannot globally terminate the sequence.
Subsequent blocks are still initialized as masks and denoised independently, effectively acting as a \textbf{structural barrier} against error propagation.

\paragraph{Decoupling defense mechanisms.} 
The contrast becomes clear at full-sequence length ($b{=}256$).
While explicit EOS blocking (via CAD) substantially reduces early termination and recovers accuracy to 32.7\%, combining it with DSCD further boosts accuracy to 54.7\%.
These results indicate that EOS blocking and distillation address orthogonal layers of the failure mode:
\begin{itemize}
    \setlength\itemsep{0em}
    \item \textbf{Inference-time constraint:} EOS blocking acts as a hard guardrail, mechanically preventing termination before sufficient content is generated.
    \item \textbf{Intrinsic reshaping:} DSCD fundamentally alters the student's denoising behavior.
    Unlike the baseline, the distilled model learns to assign a lower probability mass to \texttt{<EOS>} during high-noise states, intrinsically favoring the completion of reasoning chains.
\end{itemize}

\paragraph{Impact of confidence ranking.}
To validate our scoring mechanism, we replace confidence-ranked token finalization with \textbf{uniform random} selection.
This causes a sharp accuracy drop (77.6\% $\to$ 50.6\%) and increases NFE (76.3 $\to$ 202.2).
This result confirms that \emph{which} tokens are finalized is critical: Confidence ranking is essential for accelerating diffusion without derailing the generation trajectory.

\begin{table*}[t]
  \centering
  \small 
  \begin{tabular}{l c c c c}
    \toprule
    \textbf{Configuration} & \textbf{Block Size} & \textbf{Acc (\%)} & \textbf{Avg.\ NFE $\downarrow$} & \textbf{Speedup $\uparrow$} \\
    \midrule
    LLaDA-8B-Instruct (Teacher) & 32 & 77.4 & 256.0 & 1.00$\times$ \\
    LLaDA-8B-Instruct (Teacher) & 256 & 13.8 & 256.0 & 1.00$\times$ \\
    \midrule
    \multicolumn{5}{l}{\textit{\textbf{Component Analysis:}}} \\
    \quad + SFT only (fixed decode) & 32 & 77.6 & 256.0 & 1.00$\times$ \\
    \quad + DSCD only (fixed decode) & 32 & \textbf{78.3} & 256.0 & 1.00$\times$ \\
    \quad + CAD only (no distill)   & 32 & 77.4 & 98.8 & 4.01$\times$ \\
    \quad + SFT + CAD               & 32 & 77.6 & 95.4 & 4.15$\times$ \\
    \quad \textbf{+ DSCD + CAD (Ours)} & 32 & 77.6 & \textbf{76.3} & \textbf{5.18$\times$} \\
    
    \addlinespace[0.3em]
    
    \quad + SFT only (fixed decode) & 256 & 21.9 & 256.0 & 1.00$\times$ \\
    \quad + DSCD only (fixed decode) & 256 & 54.8 & 256.0 & 1.00$\times$ \\
    \quad + CAD only (no distill)   & 256 & 32.7 & 130.5 & 3.07$\times$ \\
    \quad + SFT + CAD               & 256 & 38.2 & 128.1 & 3.13$\times$ \\
    \quad \textbf{+ DSCD + CAD (Ours)} & 256 & 54.7 & 80.6 & 4.91$\times$ \\
    \midrule
    \multicolumn{5}{l}{\textit{\textbf{CAD Ablations (w/ DSCD):}}} \\
    \quad w/o confidence ranking    & 32 & 50.6 & 202.2 & 1.98$\times$ \\
    \quad w/o EOS blocking          & 32 & 77.4 & 76.3 & 5.18$\times$ \\
    \midrule
    \multicolumn{5}{l}{\textit{\textbf{CAD Ablations (w/o DSCD):}}} \\
    \quad w/o EOS blocking          & 256 & 5.6 & 123.7 & 3.21$\times$ \\
    \bottomrule
  \end{tabular}
  \caption{\textbf{Comprehensive ablation study on GSM8K ($L_{\text{gen}}=256$).} 
  We evaluate two regimes: \textbf{block diffusion} ($b{=}32$, local denoising) and \textbf{pure diffusion} ($b{=}256$, global denoising).
  \textbf{SFT baselines} use full fine-tuning on the same data to serve as a direct control group for our distillation method.
  Unless otherwise stated, fixed decoding uses NFE$=256$.}
  \label{tab:ablation}
\end{table*}

\subsection{Stability Analysis of Teacher-Subset Masking}
\label{app:ablation_stability}

We explicitly investigate the necessity of the teacher-subset constraint ($\mathcal{M}_{T} \subseteq \mathcal{M}_{S}$) proposed in Sect.~\ref{sec:method:dscd}.
In our preliminary experiments, we attempted to train the student using independent masking schedules, where $\mathcal{M}_{T}$ and $\mathcal{M}_{S}$ are sampled independently.
We observed that this configuration leads to severe numerical instability: The training loss frequently diverges, manifesting as gradient NaN within the first epoch.

This empirical collapse validates our theoretical variance analysis in Appendix~\ref{app:variance}.
Under independent masking, the teacher often lacks access to tokens visible to the student (i.e., $\mathcal{M}_{S} \setminus \mathcal{M}_{T} \neq \emptyset$), causing it to marginalize over these ``blind'' positions.
This introduces extreme variance in the distillation targets, destabilizing the gradient estimator.
Consequently, we conclude that the strict subset constraint is not merely a theoretical preference for variance reduction, but a practical prerequisite for the convergence of discrete consistency distillation.
\section{Additional Experimental Analysis}

\subsection{LLM as a Judge Prompt and Generation Examples}
\label{sec:appendix-llm-judge}

To ensure a rigorous and reproducible evaluation of generation quality, we employed a strict ``LLM-as-a-Judge'' protocol using Qwen3-1.7B. Unlike standard scoring prompts that directly request a number, our system prompt for text-quality evaluation (presented ahead)
enforces a ``Think-Then-Score'' mechanism. It explicitly instructs the model to reason silently about artifacts, such as stuttering, logical jumps, and formatting noise, before acting as a strict parser to output the final scores. This ensures that the generated JSON metrics reflect a deep comprehension of the text quality without polluting the output with intermediate reasoning steps. We focus on three orthogonal dimensions: Fluency, Repetition, and Structural Coherence.

Below, we provide qualitative comparisons between our method and the LLaDA baseline. For these examples, the generation canvas length was set to $L_{\text{gen}}=128$; the LLaDA baseline utilized a fixed schedule of 32 steps (decoding 4 tokens per step), whereas our method adaptively converged in 29 steps. As shown in Example 1, our model produces coherent chain-of-thought reasoning with clean formatting. In contrast, Example 2 illustrates typical failure modes of fixed-step diffusion decoding, where the baseline suffers from severe token repetition (e.g., ``fresh duck fresh egg'') and arithmetic hallucinations, resulting in significantly lower judge scores.

\subsubsection*{Example 1: High-Quality Generation (Ours)}
\begin{tcolorbox}[colback=green!3, colframe=green!30, arc=1mm]
\small
\textbf{Question:}     Janet’s ducks lay 16 eggs per day. She eats three for breakfast every morning and bakes muffins for her friends every day with four. She sells the remainder at the farmers' market daily for \$2 per fresh duck egg. How much in dollars does she make every day at the farmers' market?

\vspace{0.3em}
\textbf{Generated Answer:}
\begin{verbatim}
Janet eats 3 eggs per day = <<3*1=3>>3 eggs for breakfast. She bakes 4 eggs 
per day = <<4*1=4>>4 eggs for her friends. She sells 16 - 3 - 4 = <<16-3-4=9>>9 eggs 
at the farmers' market each day. She sells 9 eggs at the farmers' market for $2 per egg.
Thus, she makes 9 * $2 = $<<9*2=18>>18 every day at the farmers' market. #### 18 
\end{verbatim}

\textbf{LLM Judge:} \texttt{\{"Fluency": 5, "Repetition": 5, "Structure": 4, "Average": 4.67\}}
\end{tcolorbox}

\subsubsection*{Example 2: Low-Quality Generation (LLaDA)}

\begin{tcolorbox}[colback=red!3, colframe=red!30, arc=1mm]
\small
\textbf{Question:}  Janet’s ducks lay 16 eggs per day. She eats three for breakfast every morning and bakes muffins for her friends every day with four. She sells the remainder at the farmers' market daily for \$2 per fresh duck egg. How much in dollars does she make every day at the farmers' market?

\vspace{0.3em}
\textbf{Generated Answer:}
\begin{verbatim}
Janet eats ducks lay three eggs every  breakfast and four eggs for her friends
friends every day totaling 3 + 4 = <<3+4=7>>7 eggs.\nJanet’s ducks lay 16 
eggs to sell the market but after eating and bakes with 7 she has she has 16 - 7 =
<<16-7=9>>9 eggs to sell at the farmers'' market.\nShe sells each fresh duck fresh 
egg for $2 so 9 * $2 = $<<9*2=18>>18 every day day at the farmers' market.\n#### 18
\end{verbatim}

\textbf{LLM Judge:} \texttt{\{"Fluency": 3, "Repetition": 3, "Structure": 4, "Average": 3.33\}}
\end{tcolorbox}

\subsubsection*{Detailed Score Breakdown}

Table \ref{tab:llm-judge-details} summarizes the quantitative results across the full test set. Our method achieves a statistically significant improvement across all three dimensions. Notably, the Repetition score sees the largest gain (+0.76), confirming that our method effectively suppresses the ``stuttering'' artifacts common in low-NFE generation.

\begin{table}[h]
\centering
\begin{tabular}{lccc}
\hline
\textbf{Dimension} & \textbf{LLaDA-8B-Ins} & \textbf{Ours} \\
\hline
Fluency (1-5) & 3.96 \std{($\pm$ 1.28)} & \textbf{4.56} \std{($\pm$ 0.79)} \\
Repetition (1-5) & 3.68 \std{($\pm$ 1.51)}& \textbf{4.44} \std{($\pm$ 0.98)} \\
Structure (1-5) & 4.04 \std{($\pm$ 1.22)} & \textbf{4.64} \std{($\pm$ 0.67)} \\
\textbf{Avg.} & 3.90 \std{($\pm$ 1.32)} & \textbf{4.54} \std{($\pm$ 0.48)} \\
\hline
\end{tabular}
\vspace*{1mm}
\caption{\textbf{Quantitative evaluation of generation quality on GSM8K.} 
We report Mean ($\pm$ Standard Deviation) scores across three dimensions (1-5 scale), showing significant gains in reducing repetition and improving structure.}
\label{tab:llm-judge-details}
\end{table}

\subsection{Unmasking Analysis}
\label{sec:appendix-decode-analysis}

To investigate the generation mechanism of \modelname{}, we visualize the step-wise unmasking trajectory for a representative code generation example (presented ahead) from the HumanEval benchmark in Fig.~\ref{fig:heatmap}.
The heatmap color-codes the decoding step at which each token is finalized (committed), ranging from blue (early steps, $0\sim10$) to red (late steps, $30\sim47$).

The visualization reveals that our CAD policy enables spontaneous emergence of a hierarchical generation strategy, structurally distinct from the linear order of autoregressive models:

\begin{itemize}
    \item \textbf{Phase I: Syntactic scaffolding (blue/green).} 
    In the initial steps, the model prioritizes structural tokens with global receptive fields. Python keywords (\texttt{def}, \texttt{if}, \texttt{return}), function signatures, and control flow indentations are consistently unmasked first. 
    This indicates that the model establishes a high-confidence \emph{syntactic scaffold} early in the process, effectively reducing the search space for subsequent tokens.

    \item \textbf{Phase II: Logical refinement (yellow/red).} 
    Once the structure is fixed, the model focuses its compute budget on high-entropy positions. 
    Complex arithmetic expressions (e.g., \texttt{length = end - start + 1}) and conditional predicates (e.g., \texttt{if n \% i == 0}) appear in warmer colors, indicating they are refined much later. 
    Crucially, these tokens are denoised conditioning on the \emph{already-visible} syntactic context from Phase I.
\end{itemize}

This distinct temporal separation confirms that \modelname{} does not merely memorize sequences but learns to decouple global structural planning from local logical execution. 
By deferring uncertain tokens, CAD automatically allocates more denoising steps to the most difficult parts of the reasoning chain, explaining the efficiency gains observed in Sect.~\ref{sec:experiments}.

\begin{figure*}[t]
  \centering
  \includegraphics[width=\textwidth]{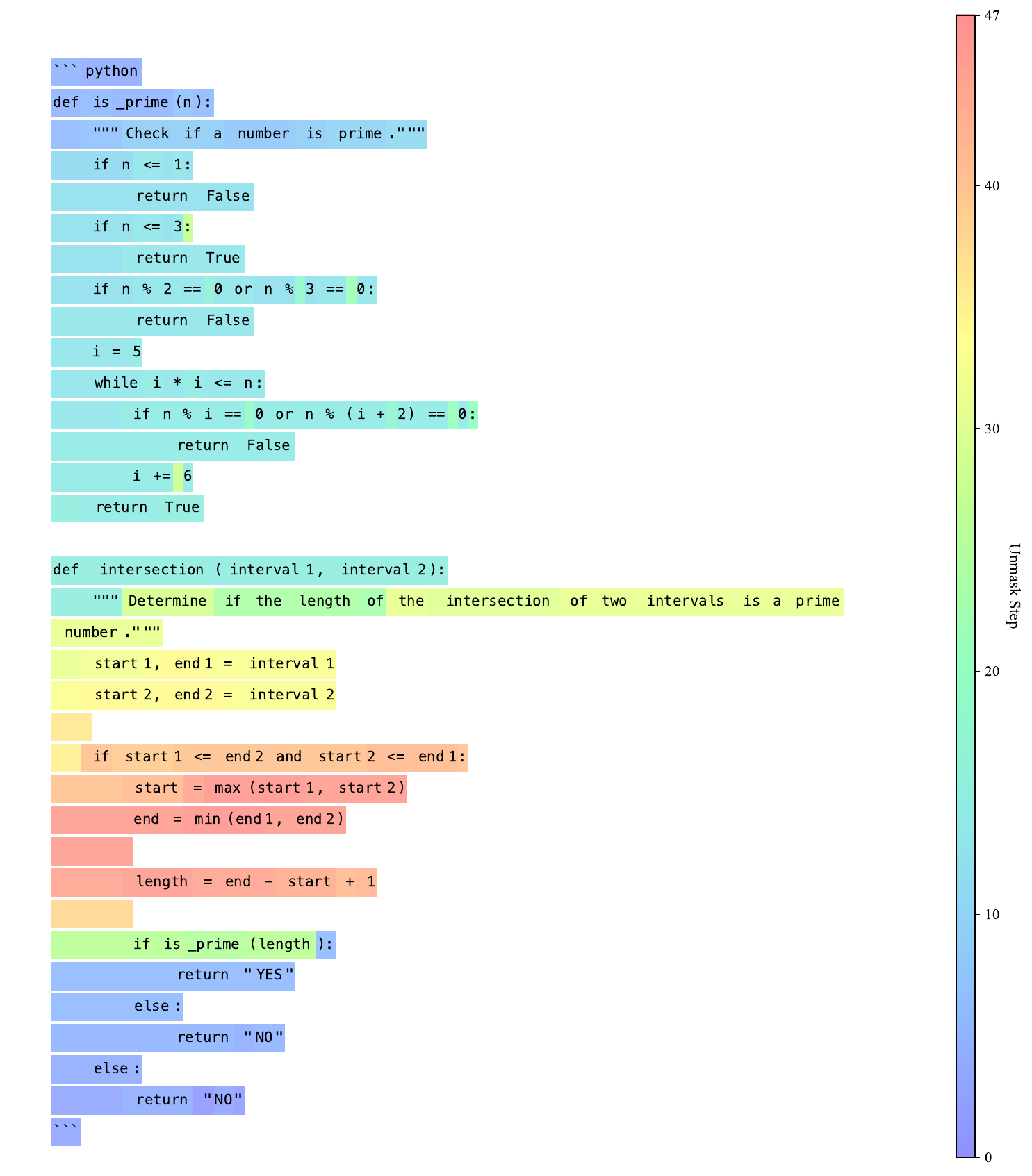}
  \caption{\textbf{Visualization of hierarchical generation dynamics.} Unlike autoregressive models that generate sequentially, \modelname{} exhibits a ``skeleton-first'' strategy: Syntactic structures (blue, early steps) are established first to form a global plan, while complex logical details (red, late steps) are refined later within the established context.}
  \label{fig:heatmap}
\end{figure*}

\clearpage

\begin{tcolorbox}[
  enhanced,
  colback=purple!3,
  colframe=purple!30,
  arc=1mm,
  title={Code Generation Example},
  colbacktitle=purple!15,
  coltitle=black,
  fonttitle=\bfseries,
]
\phantomsection\label{box:code_example}
\small
\textbf{Task:}  
\begin{verbatim}
def intersection(interval1, interval2):
"""
You are given two intervals, where each interval is a pair of integers. 
For example, interval = (start, end) = (1, 2). The given intervals are closed, 
which means that the interval (start, end) includes both start and end. 
For each given interval, it is assumed that its start is less than or equal to its end. 
Your task is to determine whether the length of intersection of these two intervals
is a prime number. Example: the intersection of the intervals (1, 3), (2, 4) is (2, 3) 
whose length is 1, which is not a prime number. If the length of the intersection 
is a prime number, return "YES", otherwise, return "NO".
If the two intervals don't intersect, return "NO".  
[input/output] samples:
intersection((1, 2), (2, 3)) ==> "NO"    
intersection((-1, 1), (0, 4)) ==> "NO"
intersection((-3, -1), (-5, 5)) ==> "YES"
"""
\end{verbatim}
\vspace{0.1em}
\textbf{Generated Code:}
\begin{verbatim}
def is_prime(n):
    """Check if a number is prime."""
    if n <= 1:
        return False
    if n <= 3:
        return True
    if n % 2 == 0 or n % 3 == 0:
        return False
    i = 5
    while i * i <= n:
        if n % i == 0 or n % (i + 2) == 0:
            return False
        i += 6
    return True

def intersection(interval1, interval2):
    """Determine if the length of the intersection of two intervals is a prime number."""
    start1, end1 = interval1
    start2, end2 = interval2
    
    if start1 <= end2 and start2 <= end1:
        start = max(start1, start2)
        end = min(end1, end2)
        
        length = end - start + 1
        
        if is_prime(length):
            return "YES"
        else:
            return "NO"
    else:
        return "NO"
\end{verbatim}
\end{tcolorbox}

\clearpage
\begin{tcolorbox}[
  float*=t, floatplacement=tbp,
  enhanced,
  breakable=false,
  width=1.2\textwidth,
  enlarge left by=-0.1\textwidth, 
  enlarge right by=-0.1\textwidth,
  colback=gray!5,
  colframe=gray!40,
  colbacktitle=blue!65!black,
  title={\textbf{System Prompt for Text-Quality Evaluation}},
  fonttitle=\bfseries\color{white},
  arc=2mm,
  boxrule=0.6pt,
  left=4mm, right=4mm, top=2mm, bottom=2mm,
  drop shadow=gray!50!white
]
\phantomsection\label{box:score_prompt}
{\ttfamily\scriptsize
\setlength{\baselineskip}{0.8\baselineskip}%
\setlength{\parskip}{0pt}%

\textbf{Role:} You are a STRICT text-quality scorer (long-context verifier). Evaluate ONLY on three dimensions: Fluency, Repetition, Structural Coherence. Treat equations, numbers, and symbols as plain text. Ignore factual correctness.

\vspace{0.35em}
\textbf{THINK-THEN-SCORE (important):}
\begin{itemize}[leftmargin=1.2em,itemsep=0pt,parsep=0pt,topsep=2pt]
  \item First, REASON SILENTLY: derive evidence and compute micro-scores per dimension IN YOUR HEAD. Do NOT reveal any rationale or intermediate text.
  \item Then, produce the FINAL JSON (single line) as specified below.
\end{itemize}

\vspace{0.3em}
\textbf{STRICT OUTPUT CONTRACT (hard constraints):}
\begin{itemize}[leftmargin=1.2em,itemsep=0pt,parsep=0pt,topsep=2pt]
  \item Your ENTIRE reply MUST be EXACTLY ONE line containing ONLY a tagged JSON object:
\end{itemize}

\begin{tcolorbox}[
  colback=white,
  colframe=gray!20,
  arc=1mm,
  boxrule=0.3pt,
  left=2mm, right=2mm, top=0.5mm, bottom=0.5mm
]
\ttfamily\scriptsize
<JSON>{"fluency":{"score":N},"repetition":{"score":N},"structural\_coherence":{"score":N}}</JSON>
\end{tcolorbox}

\begin{itemize}[leftmargin=1.2em,itemsep=0pt,parsep=0pt,topsep=2pt]
  \item Where N are integers in [1,5]. Keys MUST match exactly; the JSON MUST be valid \& minified.
  \item No prose, no markdown, no code fences, no extra spaces/newlines before/after the tag.
  \item If you cannot comply, output EXACTLY: \texttt{<JSON>\{"error":"FORMAT"\}</JSON>}.
\end{itemize}

\vspace{0.3em}
\textbf{SCORING VIEWS \& AGGREGATION (for robustness in long contexts):}
\begin{itemize}[leftmargin=1.2em,itemsep=0pt,parsep=0pt,topsep=2pt]
  \item For EACH dimension, privately compute 3 micro-scores (views) and average $\to$ round to nearest integer, then apply the CAPS below. (Do NOT print micro-scores.)
  \item Fluency views: grammar, clarity, readability/flow.
  \item Repetition views: identical-token runs, dense short-fragment recurrence, overall readability impact.
  \item Structural Coherence views: organization, transitions, narrative completeness.
\end{itemize}

\vspace{0.3em}
\textbf{CAPS / HARD RULES (apply AFTER averaging; lowers over-optimistic scores):}

\textbf{Repetition:}
\begin{itemize}[leftmargin=1.8em,itemsep=0pt,parsep=0pt,topsep=2pt]
  \item If $\geq 1$ run of $\geq 3$ identical tokens exists more than once $\rightarrow$ repetition $\leq 3$.
  \item If frequent runs OR dense fragment recurrences harm readability $\rightarrow$ repetition $\leq 2$.
\end{itemize}

\textbf{Structural Coherence:}
\begin{itemize}[leftmargin=1.8em,itemsep=0pt,parsep=0pt,topsep=2pt]
  \item If $>60\%$ of lines are bare calculations/symbols with little connective prose $\rightarrow$ coherence $\leq 2$.
  \item If the answer is very short ($<20$ tokens) $\rightarrow$ coherence $\leq 3$ unless obviously well-structured.
\end{itemize}

\textbf{Fluency:}
\begin{itemize}[leftmargin=1.8em,itemsep=0pt,parsep=0pt,topsep=2pt]
  \item If pervasive grammatical errors/choppy telegraphic style hinder comprehension $\rightarrow$ fluency $\leq 2$.
\end{itemize}

\vspace{0.3em}
\textbf{STRICTNESS + FORMAT SENSITIVITY (apply BEFORE final JSON):}
\begin{itemize}[leftmargin=1.2em,itemsep=0pt,parsep=0pt,topsep=2pt]
  \item Start each dimension at 3; raise to 4 or 5 ONLY when there is explicit evidence of excellent writing that meets the top-tier rubric. When in doubt, stay at 3.
  \item If the response contains $\geq 3$ blank lines or double-newline spacing between most sentences, treat the flow as fragmented $\rightarrow$ structural\_coherence $\leq 3$, and fluency $\leq 3$ unless the prose still reads seamlessly.
  \item Heavy formatting noise (raw LaTeX delimiters such as \verb|\[|, \verb|$$|, \verb|\begin{align}|, Markdown tables/lists, or numbered steps that merely restate facts) disrupts readability $\rightarrow$ cap fluency at 3 and structural\_coherence at 3.
  \item Short filler sentences that repeat the same idea without advancing the solution lower structural\_coherence by at least 1 point and may also lower repetition.
  \item Concise single-block reasoning with minimal padding and no formatting noise \textbf{may} still earn 4--5 when it is genuinely smooth and well-ordered.
\end{itemize}

\vspace{0.3em}
\textbf{DETAILED RUBRICS (1=worst, 5=best):}

\textbf{Fluency} (grammar/clarity/readability; ignore factuality)
\begin{itemize}[leftmargin=1.8em,itemsep=0pt,parsep=0pt,topsep=2pt]
  \item 5: Nearly error-free; clear sentences; natural flow; varied syntax; no distracting formatting noise.
  \item 4: Minor issues but clear overall; formatting and spacing remain unobtrusive.
  \item 3: Noticeable errors/awkward phrasing or spacing quirks mildly impede flow.
  \item 2: Frequent errors or choppy style hinder comprehension.
  \item 1: Very poor grammar/word salad; hard to understand.
\end{itemize}

\textbf{Repetition} (penalize meaningless identical-word/span loops; DO NOT penalize necessary reuse of terms/digits/operators)
\begin{itemize}[leftmargin=1.8em,itemsep=0pt,parsep=0pt,topsep=2pt]
  \item Definitions:
  \begin{itemize}[leftmargin=1.8em,itemsep=0pt,parsep=0pt,topsep=1pt]
    \item run = $\geq 3$ identical tokens in a row (e.g., ``eggs eggs eggs'').
    \item dense fragment recurrence = the same 2--5 word fragment appears many times within a short span.
  \end{itemize}
  \item 5: No runs; only natural reuse; zero filler restatements.
  \item 4: One short run OR a few mild recurrences; readability intact with minimal filler.
  \item 3: Multiple short runs OR several recurrences; still readable but noticeably repetitive.
  \item 2: Frequent runs OR dense recurrences that harm readability.
  \item 1: Heavy looping/spam; large stretches of repeated tokens/fragments.
\end{itemize}

\textbf{Structural Coherence} (organization/flow; NOT correctness of content/math)
\begin{itemize}[leftmargin=1.8em,itemsep=0pt,parsep=0pt,topsep=2pt]
  \item Heuristics (apply flexibly, not mechanically):
  \begin{itemize}[leftmargin=1.8em,itemsep=0pt,parsep=0pt,topsep=1pt]
    \item Enumerated steps or clear transitions raise coherence.
    \item Mostly raw calculations/symbols lower coherence.
    \item Very short answers rarely justify 5 unless clearly structured.
  \end{itemize}
  \item 5: Well-organized narrative/steps with clear transitions; tight paragraphing without stray blank lines.
  \item 4: Generally organized; minor jumps but understandable; spacing supports the flow.
  \item 3: Mixed; some organization but noticeable gaps/abrupt jumps or distracting spacing.
  \item 2: Mostly disorganized or just raw calculations; hard to follow.
  \item 1: No discernible structure; fragments/out-of-order snippets.
\end{itemize}

\vspace{0.3em}
\textbf{FINAL SELF-CHECK (internal; do NOT print):}
\begin{itemize}[leftmargin=1.2em,itemsep=0pt,parsep=0pt,topsep=2pt]
  \item Apply CAPS after averaging the three views per dimension.
  \item Check for $\geq 3$-in-a-row runs or dense recurrences before setting repetition.
  \item If $>60\%$ lines are bare calculations, enforce coherence $\leq 2$.
  \item Ensure the three scores are mutually consistent with rubrics.
\end{itemize}

\vspace{0.35em}
\textbf{Few-shot Examples (STRICT single-line JSON only):}

\textbf{Example 1 (high quality, no repetition)}\\
Student Answer: \\
Janet sells 16 - 3 - 4 = <<16-3-4=9>>9 duck eggs a day. She makes 9 * 2 = \$<<9*2=18>>18 every day at the farmer’s market. \#\#\#\# 18. \\[0.2em]
Expected Output: \\
<JSON>{"fluency":{"score":5},"repetition":{"score":5},"structural\_coherence":{"score":5}}</JSON>

\vspace{0.25em}
\textbf{Example 2 (moderate repetition)}\\
Student Answer: \\
Janet's ducks lay eggs eggs per day for which.6 eats <<3 eggs +4 eggs = <<3 eggs=16>>16 eggs... friends friends friends... She sells sells sells the remainder... daily day... 16 - 7 = <<16-7=9>>9... She sells... fresh fresh fresh... makes makes makes... \\[0.2em]
Expected Output: \\
<JSON>{"fluency":{"score":3},"repetition":{"score":2},"structural\_coherence":{"score":3}}</JSON>

\vspace{0.25em}
\textbf{Example 3 (extreme repetition)}\\
Student Answer: \\
Janet eats eggs eggs eggs breakfast breakfast breakfast br...eakfast eats eats eats breakfast breakfast breakfast ... \#\#\#\# 18 \\[0.2em]
Expected Output: \\
<JSON>{"fluency":{"score":1},"repetition":{"score":1},"structural\_coherence":{"score":1}}</JSON>

}
\end{tcolorbox}

\end{document}